\tikzstyle{dist} = [rectangle, text centered, draw=black]
\tikzstyle{approx_dist} = [rectangle, text centered, draw=black, fill=orange!30]
\tikzstyle{action} = [rectangle, rounded corners, text centered, draw=black, fill=green!30]
\newtheorem{theorem}{Theorem}
\newtheorem{lemma}{Lemma}
\newtheorem{corollary}{Corollary}
\newtheorem{assumption}{Assumption}
\title{Thompson Sampling with Approximate Inference}
\author{%
  My Phan \\
  College of Information and Computer Science\\
  University of Massachusetts\\
  Amherst, MA\\
  \texttt{myphan@cs.umass.edu} \\
   \And
  Yasin Abbasi-Yadkori \\
  VinAI \\
  Hanoi, Vietnam\\
 \texttt{yasin.abbasi@gmail.com} \\
  \And
  Justin Domke\\
  College of Information and Computer Science\\
  University of Massachusetts\\
  Amherst, MA\\
  \texttt{domke@cs.umass.edu} \\
}
\begin{document}

\maketitle

\begin{abstract}

We study the effects of approximate inference on the performance of Thompson sampling in the $k$-armed bandit problems. Thompson sampling is a successful algorithm for online decision-making but requires posterior inference, which often must be approximated in practice. We show that even small constant inference error (in $\alpha$-divergence) can lead to poor performance (linear regret) due to under-exploration (for $\alpha<1$) or over-exploration (for $\alpha>0$) by the approximation. While for $\alpha > 0$ this is unavoidable, for $\alpha \leq 0$ the regret can be improved by adding a small amount of forced exploration even when the inference error is a large constant.
\end{abstract}


\section{Introduction}


The stochastic $k$-armed bandit problem is a sequential decision making problem where at each time-step $t$, a learning agent chooses an action (arm) among $k$ possible actions and observes a random reward. Thompson sampling \citep{MAL-070} is a popular approach in bandit problems based on sampling from a posterior in each round. It has been shown to have good performance both in term of frequentist regret and Bayesian regret for the $k$-armed bandit problem under certain conditions.

This paper investigates Thompson sampling when only an \emph{approximate} posterior is available. This is motivated by the fact that in complex models, approximate inference methods such as Markov Chain Monte Carlo or Variational Inference must be used. Along this line, \citet{NIPS2017_6918} propose a novel inference method -- Ensemble sampling -- and analyze its regret for linear contextual bandits. To the best of our knowledge this is the most closely related theoretical analysis of Thompson sampling with approximate inference.


This paper analyzes the regret of Thompson sampling with approximate inference. Rather than considering a particular inference algorithm, we parameterize the error using the $\alpha$-divergence, a typical measure of inference accuracy. 
Our contributions are as follows:

\begin{itemize}

	\item {\bf Even small inference errors can lead to linear regret with naive Thompson sampling.} Given any error threshold $\epsilon>0$ and any $\alpha$ we show that approximate posteriors with error at most $\epsilon$ in $\alpha$-divergence at all times can result in linear regret (both frequentist and Bayesian). For $\alpha > 0$ and for any reasonable prior, we show linear regret due to over-exploration by the approximation (Theorem~\ref{thm:alpha_larger_than_0}, Corrolary~\ref{col:alpha_larger_than_0}).  For $\alpha < 1$ and for priors satisfying certain conditions, we show linear regret due to under-exploration by the approximation, which prevents the posterior from concentrating (Theorem~\ref{thm:alpha_less_than_1_example}, Corrolary~\ref{col:alpha_less_than_1_example}).
	
	
	\item {\bf Forced exploration can restore sub-linear regret.} For $\alpha \leq 0$ we show that adding forced exploration to Thompson sampling can make the posterior concentrate and restore sub-linear regret (Theorem~\ref{thm:uniform_sublinear_regret}) even when the error threshold is a very large constant. We illustrate this effect by showing that the performances of Ensemble sampling \citep{NIPS2017_6918} and mean-field Variation Inference \citep{doi:10.1080/01621459.2017.1285773} can be improved in this way either theoretically (Section ~\ref{subsec:uniform}) or in simulations (Section~\ref{sec:experiments}).
\end{itemize}
\section{Background and Notations.}
\label{sec:problem_formulation}
\subsection{The $k$-armed Bandit Problem.}

We consider the $k$-armed bandit problem parameterized by the mean reward vector $m = (m_1, ..., m_k) \in \mathcal{R}^k$, where $m_i^*$  denotes the mean reward of arm (action) $i$. At each round $t$, the learner chooses an action $A_t$ and observes the outcome $Y_t$ which, conditioned on $A_t$, is independent of the history up to and not including time $t$, $H_{t-1} = (A_1, Y_1, ..., A_{t-1}, Y_{t-1})$. For a time horizon $T$, the goal of the algorithm $\pi$ is to maximize the expected cumulative reward up to time $T$.

Let $\Omega \subseteq \mathcal{R}^k$ be the domain of the mean and $\Omega_i \subseteq \Omega$ denote the region where the $i$th arm has the largest mean.  Let the function $A^*: \Omega \rightarrow \{a_1, ..., a_k\}$ denoting the best action be defined as: $A^*(m) = i \text{ if }  m \in \Omega_i$. 

In the frequentist setting we assume that there exists a true mean $m^*$ which is fixed and unknown to the learner. Therefore, a policy $\pi^*$ that always chooses $A^*(m^*)$ will get the highest reward. The performance of policy $\pi$ is measured by its expected regret compared to an optimal policy $\pi^*$, which  is defined as: 
\begin{align}
\label{eq:regret}
\mathrm{Regret}(T, \pi, m^*) 
=Tm_{A^*(m^*)}^*  -  \mathbb{E} \sum_{t=1}^T  m_{A_t}^* \;.
\end{align}

 On the other hand, in the Bayesian setting, an agent expresses her beliefs about the mean vector in terms of a prior $\Pi_0$, and therefore, the mean is treated as a random variable $M = (M_1,..., M_k)$ distributed according to the prior $\Pi_0$. The Bayesian regret is the expectation of the regret under the prior of parameter $M$:
\begin{align}
\mathrm{BayesRegret}(T, \pi) = \mathbb{E}_{\Pi_0} \mathrm{Regret}(T,\pi, M) \;.
\end{align}

\subsection{Thompson Sampling with Approximate Inference}
In the frequentist setting, in order to perform Thompson sampling we define a prior which is only used in the algorithm. On the other hand, in the Bayesian setting the prior is given. 

Let $\Pi_t$ be the posterior distribution of $M|H_{t-1}$ with density function $\pi_t(m)$. Thompson sampling obtains a sample $\widehat{m}$ from $\Pi_t$ and then selects arm $A_t$ as follow: $A_t = i \text{ if } \widehat{m} \in \Omega_i$. 
In each round, we assume an approximate sampling method is available that generates sample from an approximate distribution $Q_t$. We use $q_t$ to denote the density function of $Q_t$. 

Popular approximate sampling methods include 
Markov Chain Monte Carlo (MCMC) \citep{Andrieu2003},  Sequential Monte Carlo \citep{Doucet11atutorial} and
Variational Inference (VI) \citep{doi:10.1080/01621459.2017.1285773}. There are packages that conveniently implement VI and MCMC methods, such as Stan \citep{stan:2017}, Edward \citep{tran2016edward}, PyMC \citep{Salvatier2016ProbabilisticPI} and infer.NET \citep{InferNET18}. 

To provide a general analysis of approximate sampling methods, we will use the $\alpha$-divergence (Section~\ref{sec:alpha_divergence}) to quantify the distance between the posterior $\Pi_t$ and the approximation $Q_t$. 
\subsection{The Alpha Divergence}
\label{sec:alpha_divergence}
The $\alpha$-divergence between two distributions $P$ and $Q$ with density functions $p(x)$ and $q(x)$ is defined as:
\begin{align}
\label{def:alpha_divergence_long}
D_{\alpha}(P,Q) = \frac{1- \int p(x)^{\alpha}q(x)^{1-\alpha} dx}{\alpha(1-\alpha)}. 
\end{align}

$\alpha$-divergence generalizes many divergences, including $KL(Q,P)$ ($\alpha \rightarrow 0$), $KL(P, Q)$ ($\alpha \rightarrow 1$), Hellinger distance ($\alpha = 0.5$) and $\chi^2$ divergence ($\alpha = 2$) and is a common way to measure errors in inference methods. MCMC errors are measured by the Total Variation distance, which can be upper bounded by the KL divergence using Pinsker's inequality ($\alpha = 0$ or $\alpha = 1$). Variational Inference tries to minimize the reverse KL divergence (information projection) between the target distribution and the approximation ($\alpha=0$). Ensemble sampling \citep{NIPS2017_6918} provides error guarantees using reverse KL divergence ($\alpha=0$).
Expectation Propagation tries to minimize the KL divergence ($\alpha=1$) and $\chi^2$ Variational Inference tries to minimize the $\chi^2$ divergence ($\alpha=2$). 
\begin{figure}[h]
	\centering
	\includegraphics[width=\textwidth]{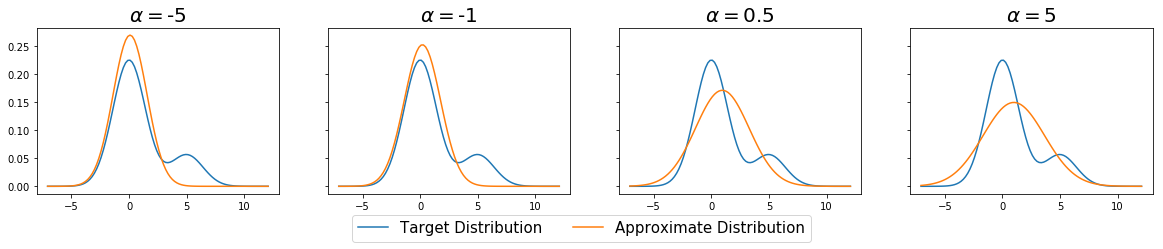}
	\caption{\label{fig:alpha_divergence} The Gaussian $Q$ which minimizes $D_{\alpha}(P,Q)$ for different values of $\alpha$ where the target distribution $P$ is a mixture of two Gaussians. Based on Figure 1 from \citep{divergence-measures-and-message-passing}}
\end{figure}

When $\alpha$ is small, the approximation fits the posterior's dominant mode. When $\alpha$ is large, the approximation covers the posterior's entire support \citep{divergence-measures-and-message-passing} as illustrated in Figure~\ref{fig:alpha_divergence}. 
Therefore changing $\alpha$ will affect the exploration-exploitation trade-off in bandit problems. 

\subsection{Problem Statement. }

\textbf{Problem Statement.} For the $k$-armed bandit problem, given $\alpha$ and $\epsilon>0$, if at all time-steps $t$ we sample from an approximate distribution $Q_t$ such that $D_{\alpha}(\Pi_t, Q_t) < \epsilon$, will the regret be sub-linear in $t$?


\section{Motivating Example}
\label{sec:motivating_example}

In this section we present a simple example to show the effects of inference errors on the frequentist regret.


{\bf Example.} Consider a 2-armed bandit problem where the reward distributions are $\mathrm{Norm}(0.6,0.2^2)$ and $\mathrm{Norm}(0.5, 0.2^2)$ for arm $1$ and $2$ respectively. The prior $\Pi_0$ is $\mathrm{Norm}\left (\mu_0^T, 0.5^2 I \right)$ where $\mu_0 = [0.1, 0.9]$ is the vector of prior means of arm $1$ and $2$ respectively, and $I$ denotes the identity matrix.  

\begin{figure}[ht]
	\centering
	\begin{subfigure}[t]{0.45\textwidth}
		\centering
		\includegraphics[width = \textwidth]{fig/motivating_example_edited.png}
		\caption{\label{fig:motivating_example}  Over-dispersed (approximation $Q_t$) and under-dispersed sampling (approximation $Z_t$) yield different posteriors after $T=100$ time-steps. $m_1$ and $m_2$ are the means of arms $1$ and $2$. $Q_t$ picks arm $2$ more often than exact Thompson sampling and $Z_t$ mostly picks arm $2$. The posteriors of exact Thompson sampling and $Q_t$ concentrate mostly in the region where $m_1>m_2$ while $Z_t$'s spans both regions.}
	\end{subfigure}
	\qquad
	\begin{subfigure}[t]{0.45\textwidth}
		\centering
		\includegraphics[width =\textwidth]{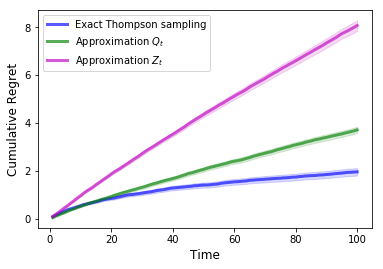}
		\caption{\label{fig:small_motivating_example_simulation}  The regret of sampling from the approximations $Q_t$  and $Z_t$  are both larger than that of exact Thompson sampling from the true posterior $\Pi_t$. Shaded regions show 95\% confidence intervals. }
	\end{subfigure}
	\caption{Approximation $Q_t$ (with high variance) and approximation $Z_t$ (with small variance) are defined in Section~\ref{sec:motivating_example} where $D_{1}(\Pi_t, Q_t) =2$ and $D_{0}(\Pi_t, Z_t) =1.5$. Arm $1$ is the true best arm. }
\end{figure}

Let $\Pi_t = \mathrm{Norm}(\mu_t, \Sigma_t)$ be the posterior at time $t$. Approximations $Q_t$ and $Z_t$ are calculated such that
$\mathrm{KL}(\Pi_t, Q_t)  =  2 \text{ and }\mathrm{KL}(Z_t, \Pi_t) = 1.5$ by multiplying the covariance $\Sigma_t$ by a constant: $Q_t=\mathrm{Norm} (\mu_t, 4.5^2\Sigma_t)$ and $Z_t = \mathrm{Norm} (\mu_t, 0.3^2\Sigma_t).$ The KL divergence between two Gaussian distributions is provided in Appendix~\ref{sec:KL_divergence_gaussian}. 

We perform the following simulations $1000$ times and plot the mean cumulative regret up to time $T=100$  in Figure~\ref{fig:small_motivating_example_simulation} using three different policies:
\begin{enumerate}[nosep]
	\item ({\bf Exact Thompson Sampling}) At each time-step $t$, sample from the true posterior $\Pi_t$.
	\item ({\bf Approximation $Q_t$}) At each time-step $t$, compute $Q_t$ from $\Pi_t$ and sample from $Q_t$. 
	\item ({\bf Approximation $Z_t$}) At each time-step $t$, compute $Z_t$ from $\Pi_t$ and sample from $Z_t$.
\end{enumerate}

The regrets of sampling from the approximations $Q_t$ and $Z_t$ are in both cases larger than that of exact Thompson sampling. Intuitively, the regret of $Q_t$ is larger because $Q_t$ explores more than the true posterior (Figure~\ref{fig:motivating_example}). 
In Section~\ref{sec:alpha_larger_than_0} we show that when $\alpha > 0$ the approximation can incur this type of error, leading to linear regret. On the other hand, the regret of $Z_t$ is larger because $Z_t$ explores less than the exact Thompson sampling algorithm and therefore commits to the sub-optimal arm (Figure~\ref{fig:motivating_example}). 
In Section~\ref{sec:alpha_less_than_1} we show that when $\alpha <1$ the approximation can change the posterior concentration rate, leading to linear regret. We also show that adding a uniform sampling step can help the posterior to concentrate when $\alpha \leq 0$, and make the regret sub-linear. 
\section{Regret Analysis When $\alpha > 0$}
\label{sec:alpha_larger_than_0}

In this section we analyze the regret when $\alpha > 0$.  Our result shows that  the approximate method might pick the sub-optimal arm with constant probability in every time-step, leading to linear regret.
\begin{theorem}[Frequentist Regret]
	\label{thm:alpha_larger_than_0}
	Let $\alpha >0$, the number of arms be $k = 2$ and $m^*_1>m^*_2$. Let $\Pi_0$ be a prior where $\mathbb{P}_{\Pi_0}(M_2 > M_1 ) > 0$. For any error threshold $\epsilon > 0$, there is a deterministic mapping $f(\Pi)$ such that for all $t\geq 0$:
	\begin{enumerate}[nosep]
		\item Sampling from $Q_t= f(\Pi_t)$ chooses arm $2$ with a constant probability.
			\item $D_{\alpha} (\Pi_t,Q_t) < \epsilon$.
	\end{enumerate}
	Therefore sampling from $Q_t$ for $T/10$ time-steps and using any policy for the remaining time-steps will cause linear frequentist regret.
\end{theorem}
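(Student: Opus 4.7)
The plan is to construct the required $f$ as a fixed-ratio mixture of $\Pi$ with an auxiliary distribution supported on $\Omega_2$. The hypothesis $\mathbb{P}_{\Pi_0}(M_2 > M_1) > 0$ is equivalent to $\Pi_0(\Omega_2) > 0$, so I would take the auxiliary distribution to be $R = \Pi_0(\,\cdot\,\mid \Omega_2)$, with density $r(m) = \pi_0(m)\mathbf{1}[m\in\Omega_2]/\Pi_0(\Omega_2)$, and define $f(\Pi) := (1-\delta)\Pi + \delta R$ for a single small constant $\delta = \delta(\alpha,\epsilon) \in (0,1)$ chosen at the end. By construction $q_t = (1-\delta)\pi_t + \delta r \geq (1-\delta)\pi_t$ pointwise, and this one inequality will drive everything.

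The first claim of the theorem is then immediate: the probability that a sample from $Q_t$ lies in $\Omega_2$ equals $(1-\delta)\Pi_t(\Omega_2) + \delta R(\Omega_2) \geq \delta$, so arm $2$ is chosen with probability at least the constant $\delta$ uniformly in $t$ and $m^*$. For the divergence bound I plan to use only the pointwise bound $q_t \geq (1-\delta)\pi_t$. Since $x \mapsto x^{1-\alpha}$ is increasing for $\alpha \in (0,1)$ and decreasing for $\alpha > 1$, this gives $\int \pi_t^\alpha q_t^{1-\alpha}\,dm \geq (1-\delta)^{1-\alpha}$ in the first case and the reversed inequality in the second. Because the sign of $\alpha(1-\alpha)$ flips in the same way, substitution into the definition of $D_\alpha$ yields the same upper bound in both regimes,
\[ D_\alpha(\Pi_t,Q_t) \;\leq\; \frac{1-(1-\delta)^{1-\alpha}}{\alpha(1-\alpha)}, \]
which tends to $0$ as $\delta \to 0$; the borderline case $\alpha = 1$ can be treated separately via $\mathrm{KL}(\Pi_t,Q_t) \leq -\log(1-\delta)$. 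Picking $\delta$ small enough, depending only on $\alpha$ and $\epsilon$, then makes the bound strictly less than $\epsilon$ uniformly in $t$.

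Once both conditions are in place, the regret conclusion is routine: over the $T/10$ rounds in which $Q_t$ is used, each pull selects arm $2$ with probability at least $\delta$, so the expected number of sub-optimal pulls is at least $\delta T/10$, each contributing $m_1^* - m_2^* > 0$ to the instantaneous regret. Summing and using nonnegativity of regret on the remaining rounds gives a lower bound of $\delta(m_1^* - m_2^*)T/10$, which is linear in $T$ because $\delta$, $m_1^*$, and $m_2^*$ are constants. The only place I expect to be careful is the sign bookkeeping when $\alpha > 1$: there $1 - \int \pi_t^\alpha q_t^{1-\alpha}\,dm$ and $\alpha(1-\alpha)$ are both negative, yet the quotient must remain a nonnegative divergence that shrinks with $\delta$, so the inequality on the integral has to be flipped carefully when dividing through.
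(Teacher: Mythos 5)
Your proposal is correct, and it reaches the theorem by a genuinely different construction than the paper's. The paper defines $q_t$ by multiplicatively deflating $\pi_t$ on $\Omega_1$ by a factor $1/r$ and inflating it on $\Omega_2$ proportionally to $\pi_t$ itself; this forces it to first prove that $\Pi_t(\Omega_2)>0$ at every $t$ (so the renormalization is well defined), and it then computes $D_\alpha(\Pi_t,Q_t)$ exactly for that family before bounding it. You instead take a fixed-ratio mixture $Q_t=(1-\delta)\Pi_t+\delta R$ with $R=\Pi_0(\cdot\mid\Omega_2)$, which needs only $\Pi_0(\Omega_2)>0$ once (to define $R$) and sidesteps the well-definedness lemma entirely; the single pointwise inequality $q_t\geq(1-\delta)\pi_t$ then gives $\pi_t/q_t\leq(1-\delta)^{-1}$ everywhere and handles all three sign regimes of $\alpha$ uniformly, including the delicate $\alpha>1$ case where under-covering by $Q_t$ is what could blow the divergence up. Your sign bookkeeping is right: for $0<\alpha<1$ both $1-\int\pi_t^\alpha q_t^{1-\alpha}$ and $\alpha(1-\alpha)$ are positive, for $\alpha>1$ both are negative, and the quotient is bounded by $\bigl(1-(1-\delta)^{1-\alpha}\bigr)/\bigl(\alpha(1-\alpha)\bigr)\to 0$ in either case. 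Amusingly, under the substitution $1-\delta=1/r$ your bound coincides exactly with the paper's upper bound $\bigl(1-r^{\alpha-1}\bigr)/\bigl(\alpha(1-\alpha)\bigr)$, and your lower bound $\delta$ on the probability of pulling arm $2$ matches the paper's $1-1/r$, so the quantitative $\epsilon$-versus-regret tradeoff the paper extracts in its appendix carries over unchanged. The one thing the paper's exact computation buys that your one-sided bounds do not is the precise value of the divergence for the constructed family, but nothing in the theorem statement requires it.
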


Typically, approximate inference methods minimize divergences. Broadly speaking, this theorem shows that making a divergence a small constant, alone, is not enough to guarantee sub-linear regret. We do not mean to imply that low regret is {\em impossible} but simply that making an $\alpha$-divergence a small constant alone is not sufficient.

At every time-step, the mapping $f$ constructs the approximation $Q_t$ from the posterior $\Pi_t$ by moving probability mass from the region $\Omega_1$ where $m_1 > m_2$ to the region $\Omega_2$ where $m_2 > m_1$.  Then $Q_t$ will choose arm $2$ with a constant probability at every time-step. The constant average regret per time-step is discussed in Appendix~\ref{apx:average_regret_alpha_larger_than_0}.

Therefore, if we sample from $Q_t=f(\Pi_t)$ for $0.1 T$ time steps and use any policy in the remaining $0.9T$ time steps, we will still incur linear regret from the $0.1T$ time-steps. On the other hand, when $\alpha \leq 0$, we show  in Section~\ref{subsec:uniform} that sampling an arm uniformly at random for $\log{T}$ time-steps and sampling from an approximate distribution that satisfies the divergence constraint for $T - \log{T}$ time-steps will result in sub-linear regret. 

\citet{pmlr-v31-agrawal13a} show that the frequentist regret of exact Thompson sampling is $O(\sqrt{T})$ with Gaussian or Beta priors and bounded rewards. Theorem~\ref{thm:alpha_larger_than_0} implies that when the assumptions in \citep{pmlr-v31-agrawal13a} are satisfied but there is a small constant inference error at every time-step, the regret is no longer guaranteed to be sub-linear.

If the assumption $m^*_1>m^*_2$ in Theorem~\ref{thm:alpha_larger_than_0} is satisfied with a non-zero probability $\left (\mathbb{P}_{\Pi_0}(M_1 > M_2) >0\right )$, the Bayesian regret will also be linear:
\begin{corollary}[Bayesian Regret]
	\label{col:alpha_larger_than_0}
	Let $\alpha >0$ and the number of arms be $k = 2$. Let $\Pi_0$ be a prior where $\mathbb{P}_{\Pi_0}(M_1 > M_2 ) > 0$ and $\mathbb{P}_{\Pi_0}(M_2>M_1)>0$. Then for any error threshold $\epsilon > 0$, there is a deterministic mapping $f(\Pi)$ such that for all $t\geq 0$ the two statements in Theorem~\ref{thm:alpha_larger_than_0} hold. 
	
		Therefore sampling from $Q_t$ for $T/10$ time-steps and using any policy for the remaining time-steps will cause linear Bayesian regret.
\end{corollary}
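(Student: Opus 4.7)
The plan is to bootstrap from Theorem~\ref{thm:alpha_larger_than_0} by conditioning on the event that the prior happens to draw a mean vector satisfying that theorem's hypothesis. The key observation is that the mapping $f$ constructed in Theorem~\ref{thm:alpha_larger_than_0} is built from $\Pi_0$ and $\Pi_t$ alone (it moves posterior mass from $\Omega_1$ to $\Omega_2$) and in particular does \emph{not} depend on the hidden parameter $m^*$. Since Theorem~\ref{thm:alpha_larger_than_0} requires $\mathbb{P}_{\Pi_0}(M_2 > M_1) > 0$ to have somewhere to move the mass, and Corollary~\ref{col:alpha_larger_than_0} assumes exactly this, the very same $f$ can be taken over, and both conclusions ($Q_t = f(\Pi_t)$ selects arm~$2$ with constant probability, and $D_\alpha(\Pi_t, Q_t) < \epsilon$) hold verbatim because they are statements about $\Pi_t$ only.

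Next I would bound the Bayesian regret from below by restricting the expectation to the event $E = \{M_1 > M_2\}$. Using nonnegativity of the instantaneous regret and the tower rule,
\begin{align*}
\mathrm{BayesRegret}(T,\pi)
&= \mathbb{E}_{\Pi_0}\bigl[\mathrm{Regret}(T,\pi,M)\bigr] \\
&\geq \mathbb{E}_{\Pi_0}\bigl[\mathrm{Regret}(T,\pi,M)\,\mathbb{1}(E)\bigr].
\end{align*}
On $E$, the instance satisfies the hypothesis $m_1^* > m_2^*$ of Theorem~\ref{thm:alpha_larger_than_0}, so whenever the policy samples from $Q_t = f(\Pi_t)$ for the prescribed $T/10$ rounds, it picks arm~$2$ with a probability bounded below by a constant $p(M) > 0$, incurring an instantaneous regret of $(M_1 - M_2)\,p(M)$ on those rounds. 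Thus $\mathrm{Regret}(T,\pi,M) \geq c(M)\,T$ on $E$ for some strictly positive $c(M) = (M_1 - M_2)\,p(M)/10 > 0$. Since $\mathbb{P}_{\Pi_0}(E) > 0$ by hypothesis and $c(M) > 0$ throughout $E$, the expectation $\mathbb{E}_{\Pi_0}[c(M)\,\mathbb{1}(E)]$ is a strictly positive constant $C$, giving $\mathrm{BayesRegret}(T,\pi) \geq C\cdot T$.

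The main thing to double-check, and the only subtle step, is that $C$ is genuinely positive after integrating $c(M)$ over $E$, rather than vanishing because $c(M)$ might collapse to zero on a full-probability subset of $E$. This is resolved by the quantitative content of Theorem~\ref{thm:alpha_larger_than_0}: the constant probability of picking arm~$2$ under $Q_t$ depends on the mass transport used to define $f$, which in turn depends only on $\epsilon$ and $\Pi_0$, not on the realization of $M$. Hence $p(M)$ is uniformly bounded below on $E$, and $c(M) \geq \mathrm{const}\cdot(M_1 - M_2)$ on $E$, whose prior expectation restricted to $E$ is strictly positive. No additional machinery beyond Theorem~\ref{thm:alpha_larger_than_0} and the definition of Bayesian regret is needed.
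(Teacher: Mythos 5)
Your proposal is correct and follows essentially the same route as the paper: reuse the mapping $f$ from Theorem~\ref{thm:alpha_larger_than_0} (which depends only on $\Pi_t$, not on $m^*$), restrict the Bayesian regret to the positive-probability event $\{M_1 > M_2\}$, and invoke the frequentist linear lower bound there. The only cosmetic difference is that the paper secures positivity of the final constant by extracting a fixed gap $\Delta$ with $\mathbb{P}(M_1 - M_2 \geq \Delta) = \gamma > 0$, whereas you integrate $(M_1 - M_2)\mathbb{1}(M_1 > M_2)$ directly; both are valid.
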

\citet{JMLR:v17:14-087} prove that the Bayesian regret of Thompson sampling for $k$-armed bandits with sub-Gaussian rewards is $O(\sqrt{T})$. Corollary~\ref{col:alpha_larger_than_0} implies that even when the assumptions in \citet{JMLR:v17:14-087}  are satisfied, under certain conditions and with approximation errors, the regret is no longer guaranteed to be sub-linear.

  \section{Regret Analysis When $\alpha < 1$}
\label{sec:alpha_less_than_1}
In this section we analyze the regret when $\alpha < 1$. Our result shows that for any error threshold, if the posterior $\Pi_t$ places too much probability mass on the wrong arm then the approximation $Q_t$ is allowed to avoid the optimal arm. If the sub-optimal arms do not provide information about the arms'  ranking, the posterior $\Pi_{t+1}$ does not concentrate. Therefore $Q_{t+1}$ is also allowed to be close in $\alpha$-divergence while avoiding the optimal arm, leading to linear regret in the long term.

\begin{theorem}[Frequentist Regret]
	\label{thm:alpha_less_than_1_example}
	Let $\alpha <1$, the number of arms be $k = 2$   and  $m^*_1>m^*_2$.  Let $\Pi_0$ be a prior where  $M_2$ and $M_1 -M_2$ are independent. There is a deterministic mapping $f(\Pi)$ such that for all $t\geq 0$:
			\begin{enumerate}[nosep]
				\item Sampling from $Q_t=f(\Pi_t)$ chooses arm $2$ with probability $1$.
				\item For any $\epsilon >0$, there exists $ 0<z \leq 1$ such that if $\mathbb{P}_{\Pi_0}(M_2 > M_1)=z$ and arm $2$ is chosen at all times before $t$ then $D_{\alpha} (\Pi_t, Q_t) < \epsilon$ . 
				
				For any $0<z\leq 1$, there exists $ \epsilon >0$ such that if $\mathbb{P}_{\Pi_0}(M_2 > M_1)=z$ and arm $2$ is chosen at all times before $t$ then $D_{\alpha} (\Pi_t, Q_t) < \epsilon$. 
			\end{enumerate}
			Therefore sampling from $Q_t$ at all time-steps results in linear frequentist regret. 
%
%
%
\end{theorem}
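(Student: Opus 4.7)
My plan is to take $f(\Pi)$ to be $\Pi$ restricted to $\Omega_2 = \{m : m_2 > m_1\}$ and renormalized, so that $q_t(m) = \pi_t(m)\,\mathbf{1}[m \in \Omega_2]/\mathbb{P}_{\Pi_t}(\Omega_2)$. This immediately gives conclusion~1, since every sample from $Q_t$ lies in $\Omega_2$ and therefore selects arm~2. The remaining task is to show that the normalizing mass $\mathbb{P}_{\Pi_t}(\Omega_2)$ stays equal to $z$ throughout any history of arm-2 pulls, and then to read off a closed form for $D_{\alpha}(\Pi_t, Q_t)$ in terms of $z$ alone.

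First I would establish the invariance. Let $W := M_1 - M_2$; by hypothesis $M_2 \perp W$ under $\Pi_0$. A single observation $Y_t$ from arm~2 has a likelihood depending only on $M_2$, so Bayes' rule written in the coordinates $(M_2, W)$ preserves the product form of the joint posterior and leaves the marginal of $W$ unchanged. Inductively, after any stretch of consecutive arm-2 pulls the marginal of $W$ under $\Pi_t$ equals the prior marginal of $W$, whence
\begin{equation*}
\mathbb{P}_{\Pi_t}(M_2 > M_1) \;=\; \mathbb{P}_{\Pi_0}(W < 0) \;=\; z.
\end{equation*}

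Given this invariant, substituting into Equation~\eqref{def:alpha_divergence_long} yields
\begin{equation*}
\int \pi_t(m)^{\alpha}\, q_t(m)^{1-\alpha}\,dm \;=\; z^{\alpha-1}\!\int_{\Omega_2}\!\pi_t(m)\,dm \;=\; z^{\alpha},
\end{equation*}
so $D_{\alpha}(\Pi_t, Q_t) = (1 - z^{\alpha})/(\alpha(1-\alpha))$ for $\alpha \in (-\infty,1)\setminus\{0\}$, and $D_{0}(\Pi_t, Q_t) = \log(1/z)$ in the limiting case $\alpha = 0$. In every case the value depends only on $z$, is finite for each $z \in (0,1]$ (immediately giving the second half of condition~2), and tends to $0$ as $z \to 1$ (giving the first half by choosing $z$ close enough to $1$, valid for all $\alpha < 1$). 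Linear regret then follows at once: arm~2 is pulled at every time-step while $m_1^* > m_2^*$, so the per-step expected regret is the positive constant $m_1^* - m_2^*$.

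The main obstacle is the invariance step. Without the prior independence of $M_2$ and $W$, pulling arm~2 would generally shift the marginal of $W$, making $\mathbb{P}_{\Pi_t}(\Omega_2)$ drift away from $z$ and destroying the uniform-in-$t$ divergence bound. The assumption that $M_2$ and $M_1 - M_2$ are independent under $\Pi_0$, combined with the fact that the arm-2 likelihood depends only on $M_2$, is exactly the structural property that conserves $z$, and that conservation is what allows the same closed-form divergence bound to hold at every time-step.
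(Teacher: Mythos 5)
Your proposal is correct and follows essentially the same route as the paper: the same construction $q_t = \pi_t \mathbf{1}[\Omega_2]/\Pi_{t,2}$, the same invariance argument that the independence of $M_2$ and $M_1-M_2$ together with arm-2-only observations keeps $\Pi_{t,2}=z$ for all $t$, and the same closed forms $(1-z^{\alpha})/(\alpha(1-\alpha))$ and $\log(1/z)$ for the divergence. No gaps.
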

We discuss why the above results are not immediately obvious. When $\alpha \rightarrow 0$, the $\alpha$-divergence becomes $\mathrm{KL}(Q_t, \Pi_t) $. We might believe that the regret should be sub-linear in this case because the posterior $\Pi_t$ becomes more concentrated, and so the total variation between $Q_t$ and $\Pi_t$ must decrease. For example, \citet{Ordentlich2004ADD} show the distribution-dependent Pinsker's inequality between $\mathrm{KL}(Q, P)$ and the total variation $\mathrm{TV}(P,Q)$ for discrete distributions $P$ and $Q$ as follows:
\begin{align}
\label{eq:Pinsker}
\mathrm{KL}(Q, P ) \geq \phi(P) \cdot \mathrm{TV}(P,Q)^2 \;.
\end{align}
Here, $\phi(P)$ is a quantity that will increase to infinity if $P$ becomes more concentrated. However, the algorithm in Theorem~\ref{thm:alpha_less_than_1_example} constructs an approximation distribution that never picks the optimal arm, so the posterior $\Pi_t$ can not concentrate and the regret is linear. The error threshold $\epsilon$ causing linear frequentist regret is correlated with the probability mass the prior places on the true best arm (Appendix~\ref{apx:prior_alpha_less_than_1}). 

 With some assumptions on the rewards, \citet{2013arXiv1311.0466G} show that the problem-dependent frequentist regret  is $O(\log T)$ for finitely-supported, correlated priors with $\pi_0(m^*) >0$.  \citet{DBLP:journals/corr/LiuL15c} study the prior-dependent frequentist regret of $2$-armed-and-$2$-models bandits, and show that with some smoothness assumptions on the reward likelihoods, the regret is $O(\sqrt{T/\mathbb{P}_{\Pi_0}(M_2>M_1)}$ if arm $1$ is the better arm.  
Theorem~\ref{thm:alpha_less_than_1_example} implies that when the assumptions in \citep{2013arXiv1311.0466G} or \citep{DBLP:journals/corr/LiuL15c} are satisfied, if $M_2$ and $M_1-M_2$ are independent and there are approximation errors, the regret is no longer guaranteed to be sub-linear. 

If the assumption $m^*_1 > m^*_2$ in Theorem~\ref{thm:alpha_less_than_1_example} is satisfied with a non-zero probability $\left (\mathbb{P}_{\Pi_0}(M_1 > M_2) >0\right )$, the Bayesian regret wil also be linear:
\begin{corollary}[Bayesian Regret]
		\label{col:alpha_less_than_1_example}
		Let $\alpha <1$ and the number of arms be $k = 2$. Let $\Pi_0$ be a prior where $\mathbb{P}_{\Pi_0}(M_1 > M_2) >0$ and $M_2$ and $M_1 -M_2$ are independent. There is a deterministic mapping $f(\Pi)$ such that for all $t\geq 0$ the 2 statements in Theorem~\ref{thm:alpha_less_than_1_example} hold. 
		
		Therefore sampling from $Q_t$ at all time-steps results in linear Bayesian regret. 
\end{corollary}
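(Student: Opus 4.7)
The plan is to reduce the corollary directly to Theorem~\ref{thm:alpha_less_than_1_example} by observing that the mapping $f$ constructed there, together with both of its guarantees, are properties of the prior and the posterior alone, with no dependence on the realization of $M$. I take $f$ to be exactly the map produced in that theorem. Since Thompson sampling with $f$ always plays arm~2, the algorithm's trajectory is insensitive to $M_1$ and interacts with $M$ only through the reward distribution of arm~2, so nothing in the construction needs access to $m^*$.

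Statement~1 of Theorem~\ref{thm:alpha_less_than_1_example} gives that $Q_t = f(\Pi_t)$ selects arm~2 with probability~1 at every time $t$, so the hypothesis ``arm~2 is chosen at all times before $t$'' is met almost surely. Applying the second form of statement~2 with $z := \mathbb{P}_{\Pi_0}(M_2 > M_1)$ (which we take to be positive, as otherwise the whole question of approximate inference is vacuous) yields a single $\epsilon > 0$, depending only on $\Pi_0$ and $\alpha$, such that $D_\alpha(\Pi_t, Q_t) < \epsilon$ uniformly in $t$. So the approximate-inference constraint is verified.

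For the regret bound, I lower-bound the Bayesian regret by the contribution from the event $E = \{M_1 > M_2\}$, which has positive $\Pi_0$-measure by hypothesis. On $E$, arm~1 is optimal and the algorithm always plays arm~2, so the realized regret equals $T(M_1 - M_2)$. Hence
\begin{align*}
\mathrm{BayesRegret}(T,\pi)
\;\geq\; \mathbb{E}_{\Pi_0}\!\left[\mathrm{Regret}(T,\pi,M)\,\mathbf{1}_E\right]
\;=\; T \cdot \mathbb{E}_{\Pi_0}\!\left[(M_1 - M_2)\,\mathbf{1}_E\right]
\;=\; c\,T,
\end{align*}
where $c := \mathbb{E}_{\Pi_0}[(M_1-M_2)\mathbf{1}_E] > 0$ because $(M_1-M_2)\mathbf{1}_E$ is non-negative and strictly positive on the positive-measure set $E$.

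The only delicate point is confirming from the proof of Theorem~\ref{thm:alpha_less_than_1_example} that the construction of $f$ and the corresponding $\epsilon$ depend only on $\Pi_0$ and $\alpha$, not on $m^*$. The underlying operation --- moving probability mass from $\Omega_1$ to $\Omega_2$ at the level of the posterior --- is a deterministic transformation of $\Pi_t$, and any reference to $m^*$ in the frequentist exposition serves only to identify which region is ``optimal.'' In the Bayesian setting this can be replaced by the intrinsic regions $\Omega_1, \Omega_2$ themselves, so the portability of $f$ is expected to go through without modification.
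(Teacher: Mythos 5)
Your proof is correct and follows essentially the same route as the paper: reduce to Theorem~\ref{thm:alpha_less_than_1_example} on the positive-measure event $\{M_1>M_2\}$, where the always-play-arm-2 policy incurs linear frequentist regret, and integrate over the prior. The only cosmetic difference is that you bound the Bayes regret by $T\,\mathbb{E}_{\Pi_0}[(M_1-M_2)\mathbf{1}_E]$ directly, whereas the paper first extracts constants $\Delta,\gamma>0$ with $\mathbb{P}(M_1-M_2\geq\Delta)=\gamma$ and uses $\gamma\Delta T$; both yield the same conclusion.
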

%

\citet{JMLR:v17:14-087} prove that the Bayesian regret of Thompson sampling for $k$-armed bandits with sub-Gaussian rewards is $O(\sqrt{T})$. Corollary~\ref{col:alpha_less_than_1_example} implies that even when the assumptions in \citet{JMLR:v17:14-087}  are satisfied, under certain conditions and with approximation errors, the regret is no longer guaranteed to be sub-linear.

We note that, unlike the case when $\alpha > 0$, if we use another policy  in $o(T)$ time-steps to make the posterior concentrate and sample from $Q_t$ for the remaining time-steps, the regret can be sub-linear. We provide a concrete algorithm in Section~\ref{subsec:uniform} for the case when $\alpha \leq 0$. 
\subsection{Algorithms with Sub-linear Regret for $\alpha \leq 0$}
\label{subsec:uniform}
In the previous section, we see that when $\alpha< 1$, the approximation has linear regret because the posterior does not concentrate. In this section we show that when $\alpha \leq 0$, it is possible to achieve sub-linear regret even when $\epsilon$ is a very large constant by adding a simple exploration step to force the posterior to concentrate (the case of $\alpha >0$ cannot be improved according to Theorem~\ref{thm:alpha_larger_than_0}). We first look at the necessary and sufficient condition that will make the posterior concentrate, and then provide an algorithm that satisfies it. \cite{DBLP:journals/corr/Russo16} and \cite{NIPS2017_7122} both show the following result under different assumptions: 
\begin{lemma}[Lemma 14 from \cite{DBLP:journals/corr/Russo16}]
	\label{lem:posterior_concentration}
	Let $m^* \in \mathcal{R}^k$ be the true parameter and let $a^* = A^*(m^*)$ be the true best arm. If for all arms $i$, $\sum_{t=1}^\infty P(A_t = i|H_{t-1}) = \infty$,
	then 
	\begin{align}
	\lim_{t \rightarrow \infty} P(A^*(M) = a^*|H_{t-1}) = 1 \text{ with probability } 1\;.
	\end{align}
	If there exists arm $i$ such that $\sum_{t=1}^\infty P(A_t = i|H_{t-1}) < \infty$, then 
$\liminf_{t \rightarrow \infty} P(A^*(M) = i|H_{t-1}) > 0$ with probability $1$.
\end{lemma}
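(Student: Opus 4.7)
The plan is to reduce both halves of the lemma to statements about the asymptotics of a single bounded martingale on $[0,1]$, with conditional Borel--Cantelli translating the summability hypotheses into pathwise statements about how often each arm is pulled. For each arm $i$, let $N_i(t) = \sum_{s=1}^t \mathbf{1}\{A_s = i\}$. Then $N_i(t) - \sum_{s=1}^t P(A_s = i | H_{s-1})$ is a martingale with bounded increments, and Levy's conditional extension of Borel--Cantelli gives that $N_i(\infty) = \infty$ a.s.\ whenever $\sum_t P(A_t=i|H_{t-1}) = \infty$ a.s., and $N_i(\infty) < \infty$ a.s.\ whenever the sum is finite. So the two hypotheses become the pathwise statements ``every arm is pulled infinitely often'' and ``arm $i$ is pulled only finitely often'', respectively.

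Next, for each action $a$ define the posterior belief $p_t(a) = P(A^*(M) = a | H_{t-1}) = \mathbb{E}[\mathbf{1}\{A^*(M)=a\} | H_{t-1}]$. This is a bounded $[0,1]$-valued martingale by the tower property, so Doob's convergence theorem gives $p_t(a) \to p_\infty(a) := P(A^*(M) = a | \mathcal{F}_\infty)$ almost surely, where $\mathcal{F}_\infty = \sigma\bigl(\bigcup_t H_{t-1}\bigr)$. For the first claim, on the event that every arm is pulled infinitely often, standard posterior consistency (applied arm by arm to the conditionally i.i.d.\ rewards from arm $j$ given $M_j$) forces the marginal posterior on each $M_j$ to concentrate at $m_j^*$. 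Hence the joint limiting posterior on $M$ is the point mass $\delta_{m^*}$, and $p_\infty(a^*) = \mathbf{1}\{A^*(m^*) = a^*\} = 1$ almost surely.

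For the second claim, on the event that arm $i$ is pulled only finitely often there is a random time $T_i$ after which no further observation of $M_i$ occurs, so the conditional distribution of $M_i$ given $\mathcal{F}_\infty$ depends only on the finite sample $(Y_s : A_s = i,\, s \le T_i)$. Under mild regularity of the prior---a non-degenerate marginal on $M_i$ with full support---this limiting conditional marginal cannot be a point mass and in particular assigns positive probability to $\{M_i > \max_{j \ne i} M_j\} = \{A^*(M) = i\}$. Thus $p_\infty(i) > 0$ almost surely, which together with the martingale convergence above yields $\liminf_t p_t(i) = p_\infty(i) > 0$.

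The main obstacle is making the posterior-consistency steps precise under general priors and likelihoods: for the first half, verifying that infinite pulls of arm $j$ truly collapse the marginal on $M_j$ to $\delta_{m_j^*}$; and for the second, ruling out that information from arms $j \ne i$ could collapse the marginal on $M_i$ via prior correlations. With independence of the arms in the prior, or under Doob's consistency theorem combined with suitable identifiability of the reward likelihood, both statements are standard, and the conditional Borel--Cantelli plus bounded-martingale skeleton then closes the argument. Russo--Van Roy and Qin et al.\ impose these regularity conditions slightly differently, which accounts for the ``different assumptions'' noted in the text.
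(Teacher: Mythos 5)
The paper does not prove this lemma; it is imported verbatim as Lemma~14 of \citet{DBLP:journals/corr/Russo16} (see also \citet{NIPS2017_7122}), so there is no in-paper proof to compare against. Your skeleton is the right one and matches how those sources argue: L\'evy's conditional Borel--Cantelli converts the summability hypotheses into the pathwise statements ``arm $i$ is pulled infinitely/finitely often,'' and $P(A^*(M)=a\,|\,H_{t-1})$ is a bounded Doob martingale, so it converges a.s.\ and the whole lemma reduces to identifying the limit. The two places you flag as ``the main obstacle'' are indeed where all the real work sits, and they are exactly what Assumptions~\ref{assumption:prior} and~\ref{assumption:prior2} are for: (i) posterior consistency for an infinitely-sampled arm requires $m^*$ to lie in the support of a prior whose density is bounded away from $0$ and an identifiable likelihood (the exponential-family/Gaussian structure), and one also needs concentration in a strong enough sense that mass of $\Omega_{a^*}$ tends to $1$, which uses that $m^*$ lies in the \emph{interior} of $\Omega_{a^*}$; (ii) for the finitely-sampled arm $i$ under a \emph{correlated} prior, the infinite data from the other arms does influence the marginal of $M_i$ through the prior, and the non-degeneracy of the limiting marginal is rescued by the two-sided bounds $0<\inf\pi_0\le\sup\pi_0<\infty$ on the rectangle, since only finitely many likelihood factors involve $m_i$. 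With those assumptions made explicit, your argument closes; as a blind reconstruction of the cited proof it is accurate, but it is a sketch with the hard analytic steps deferred to the regularity conditions rather than a self-contained proof.
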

 \cite{DBLP:journals/corr/Russo16} make the following assumptions, which allow correlated priors:
\begin{assumption}
	\label{assumption:prior}
	Let the reward distributions be in the canonical one dimensional exponential family with the density: $p(y|m) = b(y)\exp(mT(y) - A(m))$
	where $b, T$ and $A$ are known function and $A(m)$ is assumed to be twice differentiable. The parameter space $\Omega = (\overline{m}, \underline{m})$ is a bounded open hyper-rectangle, the prior density is uniformly bounded with $0 < \inf_{m \in \Omega} \pi_0(m) < \sup_{m \in \Omega} \pi_0(m) <\infty$ and the log-partition function has bounded first derivative with $\sup_{\theta \in [\overline{m}, \underline{m} ]} |A'(m)| < \infty$.
\end{assumption}
\cite{NIPS2017_7122} make the following assumptions:
\begin{assumption}
	\label{assumption:prior2}
	Let the prior be an uncorrelated multivariate Gaussian. Let the reward distribution of arm $i$ be  $\mathrm{Norm}(m_i, \sigma^2)$ with a common known variance $\sigma^2$ but unknown mean $m_i$. 
\end{assumption}

Even though we consider the error in sampling from the posterior distribution, the regret is a result of choosing the wrong arm. We define $\overline{\Pi}_t$ as the posterior distribution of the best arm and $\overline{Q}_t$ as the approximation of $\overline{\Pi}_t$ with the density functions
\begin{align*}
\overline{\pi}_t(i) = P(A^* = i|H_{t-1}) \text{ and } \overline{q}_t(i)  = P(A_t = i|H_{t-1}).
\end{align*} 
We now define an algorithm where each arm will be chosen infinitely often, satisfying the condition of Lemma~\ref{lem:posterior_concentration}.
\begin{theorem}[Bayesian and Frequentist Regret]
	\label{thm:uniform_sublinear_regret}
	Consider the case when Assumption~\ref{assumption:prior} or \ref{assumption:prior2} is satisfied. Let $\alpha \leq 0$ and $p_t= o(1)$ be such that $\sum_{t=1}^\infty p_t = \infty$. For any number of arms $ k$, any prior $\Pi_0$ and any error threshold $\epsilon > 0$, the following algorithm has $o(T)$ frequentist regret: at every time-step $t$,
	\begin{itemize}[nosep]
		\item with probability $1-p_t$, sample from an approximate posterior $Q_t$ such that $D_{\alpha} (\overline{\Pi}_t, \overline{Q}_t) < \epsilon$, 
		\item with probability $p_t$, sample an arm uniformly at random.
	\end{itemize}
	Since the Bayesian regret is the expectation of the frequentist regret over the prior, for any prior if the frequentist regret is sub-linear at all points the Bayesian regret will be sub-linear. 
\end{theorem}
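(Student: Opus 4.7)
The plan is to use the forced uniform step to activate Lemma~\ref{lem:posterior_concentration} so that the true posterior $\overline{\Pi}_t$ concentrates on the best arm, then exploit the sign constraint $\alpha\leq 0$ together with the bound $D_\alpha(\overline{\Pi}_t,\overline{Q}_t)<\epsilon$ to transfer concentration onto $\overline{Q}_t$, and finally conclude that the per-step regret vanishes. Since the mixed algorithm picks each arm $i$ with conditional probability at least $p_t/k$, we have $\sum_{t=1}^\infty P(A_t=i\mid H_{t-1}) \geq (1/k)\sum_t p_t = \infty$. Under Assumption~\ref{assumption:prior} or Assumption~\ref{assumption:prior2}, Lemma~\ref{lem:posterior_concentration} then gives
\[
\overline{\pi}_t(a^*) \;=\; P\bigl(A^*(M)=a^*\mid H_{t-1}\bigr) \;\xrightarrow[t\to\infty]{}\; 1 \qquad \text{with probability }1,
\]
and in particular $\overline{\pi}_t(i)\to 0$ for every sub-optimal arm $i$. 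This is the only use of the assumptions.

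Next I would transfer this concentration to $\overline{Q}_t$. Written out for discrete distributions on the $k$ arms, the bound $D_\alpha(\overline{\Pi}_t,\overline{Q}_t)<\epsilon$ is equivalent to
\[
\sum_{i=1}^{k}\overline{\pi}_t(i)^{\alpha}\,\overline{q}_t(i)^{1-\alpha}\;\leq\; C(\alpha,\epsilon)
\]
for a finite constant $C(\alpha,\epsilon)$ (when $\alpha<0$ the factor $\alpha(1-\alpha)<0$ flips the inequality obtained from the definition). For any sub-optimal $i$ with $\alpha<0$, the factor $\overline{\pi}_t(i)^{\alpha}=\overline{\pi}_t(i)^{-|\alpha|}$ diverges as $\overline{\pi}_t(i)\to 0$, which forces $\overline{q}_t(i)^{1-\alpha}\to 0$ and hence $\overline{q}_t(i)\to 0$. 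The borderline case $\alpha=0$ (where $D_0=\mathrm{KL}(\overline{Q}_t,\overline{\Pi}_t)$) is handled separately but in the same spirit: using $q\log(q/p)\geq q-p$ termwise, if $\overline{q}_t(i)$ stayed $\geq\delta$ on a subsequence on which $\overline{\pi}_t(i)\to 0$, the $i$-th summand of the KL would diverge while the remaining summands are bounded below by a constant, contradicting $\mathrm{KL}<\epsilon$. Either way, $\overline{q}_t(a^*)\to 1$ almost surely.

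Finally, the algorithm's conditional probability of pulling the best arm is $(1-p_t)\overline{q}_t(a^*)+p_t/k\to 1$ a.s.\ because $p_t=o(1)$, so by bounded convergence $P(A_t\neq a^*)\to 0$. Under either assumption the reward gaps are bounded by some $\Delta_{\max}<\infty$, so the per-step expected regret is at most $\Delta_{\max}\,P(A_t\neq a^*)$, and Cesaro summation gives $\mathrm{Regret}(T,\pi,m^*)\leq\Delta_{\max}\sum_{t=1}^{T}P(A_t\neq a^*)=o(T)$. The Bayesian conclusion follows by taking expectation over $\Pi_0$ and applying Fubini, with uniform integrability immediate from boundedness of $\Delta_{\max}$. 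The main obstacle is the transfer step: because $\epsilon$ is a fixed constant rather than vanishing, one has to argue that an $O(1)$ budget in $\alpha$-divergence is still enough to propagate concentration. The sign of $\alpha$ is decisive here — for $\alpha>0$ the factor $\overline{\pi}_t(i)^\alpha$ actually \emph{vanishes} as $\overline{\pi}_t(i)\to 0$ and so imposes no penalty for placing mass on a bad arm (consistent with the negative result Theorem~\ref{thm:alpha_larger_than_0}), while for $\alpha\leq 0$ the same factor diverges and anchors $\overline{q}_t$ onto the posterior's dominant arm.
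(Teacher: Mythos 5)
Your proposal is correct and follows essentially the same route as the paper's proof: forced exploration triggers Lemma~\ref{lem:posterior_concentration} so $\overline{\Pi}_t$ concentrates, the constraint $D_\alpha(\overline{\Pi}_t,\overline{Q}_t)<\epsilon$ with $\alpha\leq 0$ transfers that concentration to $\overline{Q}_t$ (the paper's Lemma~\ref{lem:approximation_convergence}), and a Ces\`aro argument plus bounded convergence yields $o(T)$ regret (the paper's Lemmas~\ref{lem:sublinear_regret} and~\ref{lem:sublinear_regret2}). Your transfer step for $\alpha<0$ is in fact slightly cleaner than the paper's $\limsup$/subsequence contradiction --- you bound $\overline{q}_t(i)^{1-\alpha}\leq C(\alpha,\epsilon)\,\overline{\pi}_t(i)^{|\alpha|}$ directly --- but the underlying mechanism is identical.
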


The following lemma shows that the error in choosing the arms is upper bounded by the error in choosing the parameters. Therefore whenever the condition $D_{\alpha} (\Pi_t, Q_t) <\epsilon $ is satisfied, the condition $D_{\alpha} (\overline{\Pi}_t, \overline{Q}_t)  < \epsilon$ will be satisfied and Theorem~\ref{thm:uniform_sublinear_regret} is applicable.
\begin{lemma}
	\label{lem:convert_distribution}
	\begin{align*}
	D_{\alpha} (\overline{\Pi}_t, \overline{Q}_t)  \leq D_{\alpha} (\Pi_t, Q_t) \,.  
	\end{align*}
\end{lemma}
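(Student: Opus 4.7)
The plan is to exploit the observation that $\overline{\Pi}_t$ and $\overline{Q}_t$ are exactly the pushforwards of $\Pi_t$ and $Q_t$ under the deterministic mapping $A^*:\Omega\to\{a_1,\dots,a_k\}$. Because Thompson sampling (whether exact or approximate) selects arm $i$ precisely when the sampled mean lies in $\Omega_i$, we have $\overline{\pi}_t(i) = \int_{\Omega_i} \pi_t(m)\,dm$ and $\overline{q}_t(i) = \int_{\Omega_i} q_t(m)\,dm$. The lemma is then an instance of the data processing inequality for $\alpha$-divergence, and I would prove it directly by applying H\"older's inequality on each cell of the partition $\{\Omega_i\}_{i=1}^k$.

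Concretely, I would split the integral appearing in $D_\alpha(\Pi_t,Q_t)$ along this partition,
$$\int_\Omega \pi_t(m)^\alpha q_t(m)^{1-\alpha}\,dm \;=\; \sum_{i=1}^k \int_{\Omega_i} \pi_t(m)^\alpha q_t(m)^{1-\alpha}\,dm,$$
and for $\alpha\in(0,1)$ apply H\"older's inequality on each $\Omega_i$ with conjugate exponents $1/\alpha$ and $1/(1-\alpha)$ (both greater than $1$). This yields
$$\int_{\Omega_i} \pi_t(m)^\alpha q_t(m)^{1-\alpha}\,dm \;\leq\; \left(\int_{\Omega_i}\pi_t(m)\,dm\right)^{\!\alpha}\!\left(\int_{\Omega_i}q_t(m)\,dm\right)^{\!1-\alpha} = \overline{\pi}_t(i)^\alpha\,\overline{q}_t(i)^{1-\alpha}.$$
Summing in $i$ and dividing by $\alpha(1-\alpha)>0$, the defining formula for $D_\alpha$ immediately gives $D_\alpha(\overline{\Pi}_t,\overline{Q}_t)\leq D_\alpha(\Pi_t,Q_t)$.

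For $\alpha$ outside $[0,1]$ the same strategy works but two signs flip simultaneously. The exponents $1/\alpha$ and $1/(1-\alpha)$ no longer both exceed one (one is negative and the other lies in $(0,1)$), so I would invoke the reverse H\"older inequality, which reverses the per-cell bound to $\int_{\Omega_i} \pi_t(m)^\alpha q_t(m)^{1-\alpha}\,dm \geq \overline{\pi}_t(i)^\alpha\,\overline{q}_t(i)^{1-\alpha}$. Since the normalizer $\alpha(1-\alpha)$ in the definition of $D_\alpha$ is also negative in this regime, dividing by it flips the inequality again and recovers the desired direction. The boundary cases $\alpha\in\{0,1\}$ follow either by continuity (the limits recover $\mathrm{KL}(Q,P)$ and $\mathrm{KL}(P,Q)$) or directly from the log-sum inequality, which is the classical data processing inequality for KL.

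The main obstacle is not conceptual but bookkeeping: checking that the reverse H\"older inequality is actually applicable (the relevant integrals must be finite and strictly positive), and tracking the double sign-flip when $\alpha\notin[0,1]$. Degenerate cells where $\overline{\pi}_t(i)=0$ or $\overline{q}_t(i)=0$ must be handled by hand, but in each such case one side of the per-cell inequality is either trivially zero or infinite, so the overall bound still goes through.
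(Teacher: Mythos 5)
Your proposal is correct, and it arrives at exactly the same per-cell inequality as the paper -- namely that $\int_{\Omega_i}\pi_t(m)^{\alpha}q_t(m)^{1-\alpha}\,dm$ is bounded by $\overline{\pi}_t(i)^{\alpha}\,\overline{q}_t(i)^{1-\alpha}$ in the appropriate direction -- but by a different tool. The paper partitions $\Omega$ into the cells $\Omega_i$ and applies a bivariate Jensen inequality (its Lemma~\ref{lem:Jensen}) to the jointly convex integrand $f(p,q)=p^{\alpha}q^{1-\alpha}/(\alpha(\alpha-1))$ (and to $q\log(q/p)$, $p\log(p/q)$ at the endpoints), which requires introducing a uniform probability measure on each cell and hence the normalizing volumes $|\Omega_i|$; you instead apply H\"older with exponents $1/\alpha$ and $1/(1-\alpha)$ for $\alpha\in(0,1)$, reverse H\"older outside $[0,1]$, and the log-sum inequality at $\alpha\in\{0,1\}$. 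The two are classically interchangeable (joint convexity of $p^{\alpha}q^{1-\alpha}$ is itself usually proved via H\"older), but your route has a small technical advantage: it never needs the cell volumes $|\Omega_i|$ to be finite, whereas the paper's Jensen argument implicitly assumes each $\Omega_i$ has finite Lebesgue measure, which holds under Assumption~\ref{assumption:prior} but not for the lemma as stated in full generality. Your flagged caveats are the right ones -- the double sign flip for $\alpha\notin[0,1]$ (reverse H\"older reverses the per-cell bound, and dividing by the now-negative $\alpha(1-\alpha)$ reverses it back) and the degenerate cells with $\overline{\pi}_t(i)=0$ or $\overline{q}_t(i)=0$, both of which you would resolve correctly as described.
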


We also note that we can achieve sub-linear regret even when $\epsilon$ is a very large constant. We revisit Eq.~\ref{eq:Pinsker} to provide the intuition: 
$\mathrm{KL}(Q, P ) \geq \phi(P) \cdot \mathrm{TV}(P,Q)^2$. Here, $\phi(P)$ is a quatity that will increase to infinity if $P$ becomes more concentrated. Hence, if $KL(\overline{Q}_t, \overline{\Pi}_t) < \epsilon$ for any constant $\epsilon$ and $\overline{\Pi}_t$ becomes concentrated, the total variation $\mathrm{TV}(\overline{Q_t}, \overline{\Pi}_t)$ will decrease. Therefore, $\overline{Q}_t$ will become concentrated, resulting in sub-linear regret. 

{\bf Application. }  \citet{NIPS2017_6918} propose an approximate sampling method  called Ensemble sampling where they maintain a set of $\mathcal{M}$ models to approximate the posterior and analyze its regret for the linear contextual bandits when $\mathcal{M}$  is $\Omega(\log(T))$. For the $k$-armed bandit problem and when $\mathcal{M}$  is $\Theta(\log(T))$, Ensemble sampling satisfies the condition $\mathrm{KL}(\overline{Q}_t, \overline{\Pi}_t) < \epsilon$ in Theorem~\ref{thm:uniform_sublinear_regret} with high probability. In this case, \citet{NIPS2017_6918} show a regret bound that scales linearly with $T$. We discuss in Appendix~\ref{sec:related_works_appendix} how to apply  Theorem~\ref{thm:uniform_sublinear_regret} to get sub-linear regret with Ensemble sampling when $\mathcal{M}$  is $\Theta(\log(T))$. 
\section{Simulations}
\label{sec:experiments}
For each approximation method we repeat the following simulations for $1000$ times and plot the mean cumulative regret, using five different policies.
\begin{enumerate}[nosep]
	\item ({\bf Exact Thompson sampling}) Use exact posterior sampling to choose an action and update the posterior (for reference). 
	\item ({\bf Approximation method}) Use the approximation method to choose an action and update the posterior. We use the approximation naively without any modification. 
	\item ({\bf Forced Exploration}) With a probability (the exploration rate), choose an action uniformly at random and update the posterior. Otherwise, use the approximation method to choose an action and update the posterior. This is the method suggested by Thm. \ref{thm:uniform_sublinear_regret}.
	\item ({\bf Approximate Sample}) Use the approximation method to choose an action. Use exact posterior sampling to update the posterior.
	\item ({\bf Approximate Update}) Use exact posterior sampling to choose an action. Use the approximate method to update the posterior. 
\end{enumerate}
The last two policies are performed to understand how the approximation affects the posterior (discussed in Section~\ref{subsec:discussion}). We update the posterior using the closed-form formula when both the prior and reward distribution are Gaussian in Appendix~\ref{apx:posterior_calculation}. 
\begin{figure}[h]
	\centering
	\begin{subfigure}[ht]{\textwidth}
		\centering
		\includegraphics[width=\textwidth]{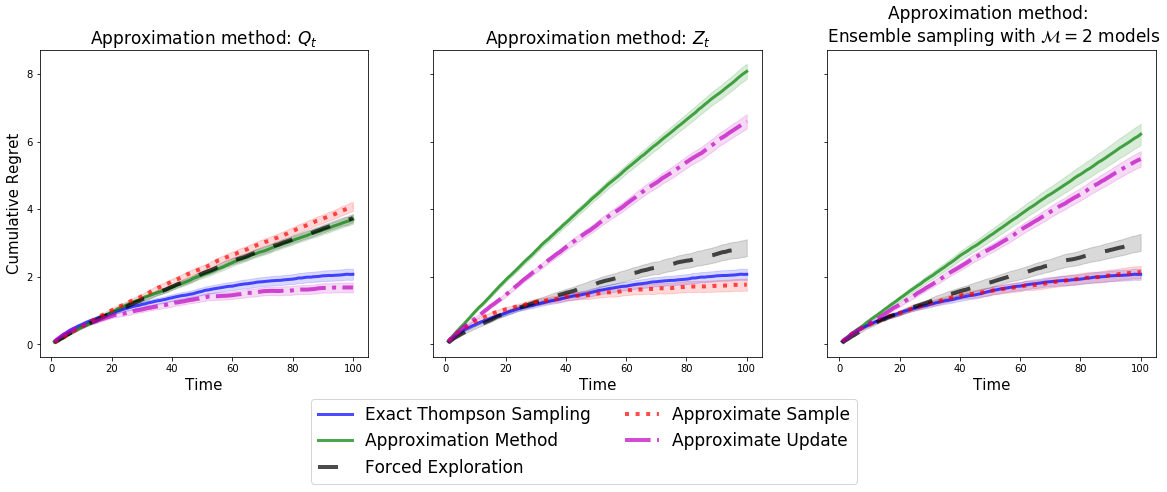}
		\caption{\label{fig:simulation} Applying approximations $Q_t, Z_t$ and Ensemble Sampling to the motivating example (Section~\ref{subsec:small_simulation}). }
	\end{subfigure}
	\begin{subfigure}[ht]{\textwidth}
		\centering
		\includegraphics[width=\textwidth]{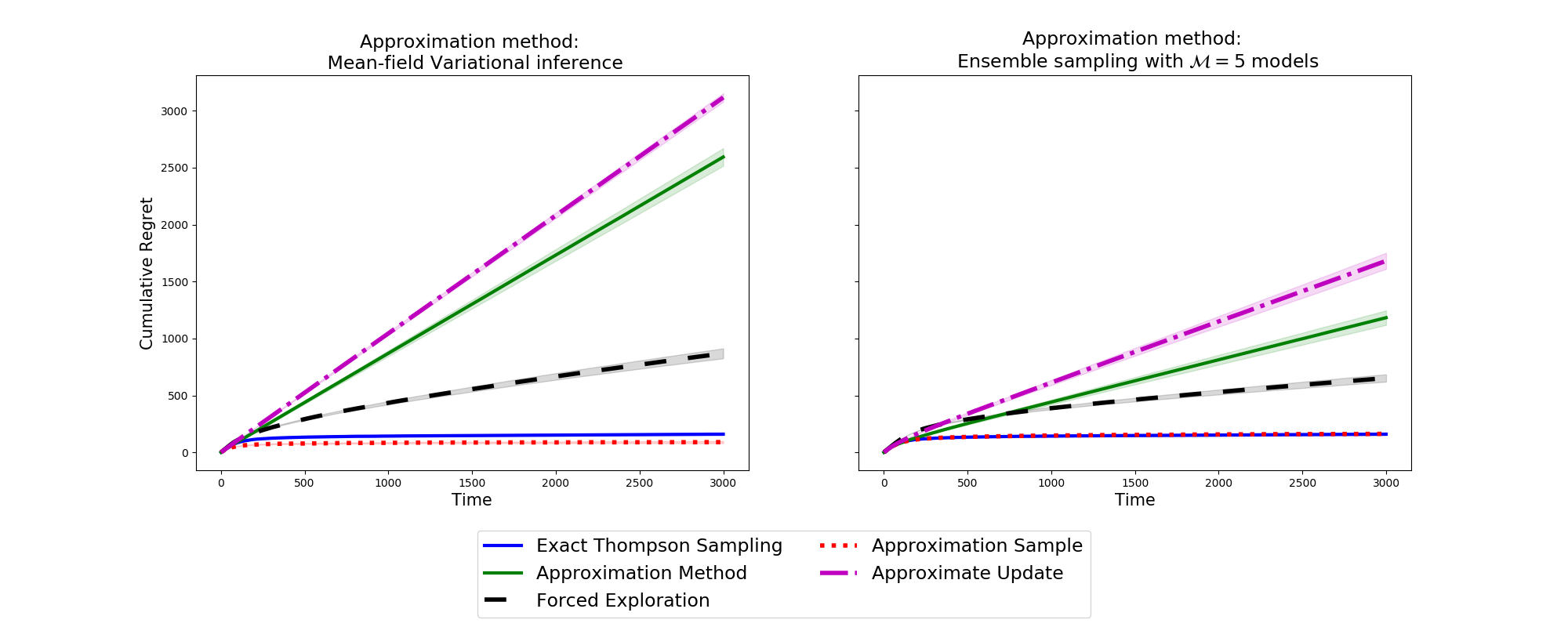}
		\caption{\label{fig:simulation50} Applying mean-field Variational Inference (VI) and Ensemble sampling on a $50$-armed bandit (Section~\ref{subsec:simulation50}). }
	\end{subfigure}
	\caption{Updating the posterior by exact Thompson sampling or adding forced exploration does not help the over-explored approximation $Q_t$, but lowers the regrets of the under-explored approximations $Z_t$, Ensemble sampling and mean-field VI.
		Shaded regions show 95\% confidence intervals.}
\end{figure}
\subsection{Adding Forced Exploration to the Motivating Example}
\label{subsec:small_simulation}

In this section we revisit the example in Section~\ref{sec:motivating_example}. We apply $Q_t, Z_t$ and Ensemble sampling with $\mathcal{M}=2$ models to the bandit problem described in the example. We set the exploration rate at time $t$ to be $1/t$,  $T=100$ and show the results in Figure~\ref{fig:simulation} and discuss them in Section~\ref{subsec:discussion}.

\subsection{Simulations of Ensemble Sampling and Variational Inference for $50$-armed bandits}
\label{subsec:simulation50}

Now we add forced exploration to mean-field Variational Inference (VI) and Ensemble Sampling with $\mathcal{M}=5$ models for a $50$-armed bandit instance. 
We generate the prior and the reward distribution as follows:  the prior is $\mathrm{Norm}(\mathbf{0}, \Sigma_0)$. To generate a positive semi-definite matrix $\Sigma_0$, we generate  a random matrix $A$ of size $(k,k)$ where entries are uniformly sampled from $[0,1)$ and set $\Sigma_0 = A^TA/k$. The true mean $m^*$ is sampled from the prior. The reward distribution of arm $i$ is $\mathrm{Norm}(m^*_i,1)$.

Mean-field VI approximates the posterior by finding an uncorrelated multivariate Gaussian distribution $Q_t$ that minimizes $KL(\Pi_t, Q_t)$.  If the posterior is $\Pi_t = \mathrm{Norm}(\mu_t, \Sigma_t)$ then $Q_t$ has the closed-form solution $Q_t = \mathrm{Norm}(\mu_t, \text{Diag}(\Sigma_t^{-1})^{-1})$, which we used to perform the simulations. We set the exploration rate at time $t$ to be $50/t$, $T=3000$, show the results in Figure~\ref{fig:simulation50} and discuss them in Section~\ref{subsec:discussion}.
 \subsection{Discussion}
 \label{subsec:discussion}
 We observe in Figure~\ref{fig:simulation} that the regret of $Q_t$ calculated from the posterior updated by exact Thompson sampling does not change significantly. Moreover, exact posterior sampling with the posterior updated by $Q_t$ has the same regret as exact Thompson sampling. These two observations imply that $Q_t$ has the same effect on the posterior as exact Thompson sampling.  Therefore adding forced exploration is not helpful. 
 
 On the other hand, in Figures~\ref{fig:simulation} and~\ref{fig:simulation50} the regrets of $Z_t$, Ensemble sampling 
 and mean-field VI calculated from the posterior updated by exact Thompson sampling decrease significantly.  Moreover, exact posterior sampling with the posterior updated by the approximations has similar regret to using the approximations. This behaviour is likely because the approximation causes the posterior to concentrate in the wrong region\footnote{Note that in the case where there are $2$ arms (Figure~\ref{fig:simulation}), exact posterior sampling with the posterior updated by the approximate method has slightly lower regret than naively using the approximate method. This is only because there are only $2$ regions, so exact posterior sampling explores more than the approximation in the other region, which happens to be the correct one.}. In combination, these two observations suggest that these methods do not explore enough for the posterior to concentrate. Therefore adding forced exploration is helpful, which is compatible with the result in Theorem~\ref{thm:uniform_sublinear_regret}. 
 
\section{Related Work}
\label{sec:related_work}

There have been many works on sub-linear Bayesian and frequentist regrets for exact Thompson sampling. We discussed relevant works in detail in Section~\ref{sec:alpha_larger_than_0} and Section~\ref{sec:alpha_less_than_1}.

Ensemble sampling \citep{NIPS2017_6918} gives a theoretical analysis of Thompson sampling with one particular approximate inference method. \citet{NIPS2017_6918} maintain a set of $\mathcal{M}$  models to approximate the posterior, and analyzed its regret for linear contextual bandits when $\mathcal{M}$  is $\Omega(\log(T))$. For the $k$-armed bandit problem and when $\mathcal{M}$  is $\Theta(\log(T))$, Ensemble sampling satisfies the condition $\mathrm{KL}(\overline{Q}_t, \overline{\Pi}_t) < \epsilon$ in Theorem~\ref{thm:uniform_sublinear_regret} with high probability. In this case, the regret of Ensemble sampling scales linearly with $T$. 

We show in  Theorem~\ref{thm:alpha_less_than_1_example} that when the constraint $\mathrm{KL}(Q_t, \Pi_t) < \epsilon$ is satisfied, which implies by Lemma~\ref{lem:convert_distribution} that $\mathrm{KL}(\overline{Q}_t, \overline{\Pi}_t) < \epsilon$ is satisfied, there can exist approximation algorithms that have linear regret in $T$. This result provides a linear lower bound, which is complementary with the linear regret upper bound of Ensemble Sampling in \citep{NIPS2017_6918}. Moreover, we show in Appendix~\ref{sec:related_works_appendix} that we can apply  Theorem~\ref{thm:uniform_sublinear_regret} to get sub-linear regret with Ensemble sampling with $\Theta(\log(T))$ models. 

In reinforcement learning, there is a notion that certain approximations are "stochastically optimistic" and that this has implications for regret \citep{Osband:2016:GEV:3045390.3045641}. This is similar in spirit to our analysis in terms of $\alpha$-divergence, in that the characteristics of inference errors are important.

There has been a number of empirical works using approximate methods to perform Thompson sampling. \citet{DBLP:journals/corr/abs-1802-09127} implement variational inference, MCMC, Gaussian processes and other methods on synthetic and real world data sets and measure the regret. \citet{pmlr-v84-urteaga18a} derive a variational method for contextual bandits. \citet{NIPS2015_5985} use particle filtering to implement Thompson sampling for matrix factorization.

Finally, if exact inference is not possible, it remains an open question if it is better to use Thompson sampling with approximate inference, or to use a different bandit method that does not require inference with respect to the posterior. For example \citet{pmlr-v97-kveton19a} propose an algorithm based on the bootstrap.

\section{Conclusion}

In this paper we analyzed the performance of approximate Thompson sampling when at each time-step $t$, the algorithm obtains a sample from an approximate distribution $Q_t$ such that the $\alpha$-divergence between the true posterior and $Q_t$ remains at most a constant $\epsilon$ at all time-steps. 

Our results have the following implications. To achieve a sub-linear regret, we can only use $\alpha >0$ for $o(T)$ time-steps. Therefore we should use $\alpha \leq 0$ with forced exploration to make the posterior concentrate. This method theoretically guarantees a sub-linear regret even when $\epsilon$ is a large constant.


\subsubsection*{Acknowledgments}
We thank Huy Le for providing the proof of Lemma 9. 
\bibliography{thompson}
\bibliographystyle{icml2019}
\clearpage
\appendix
\section{Proof of Theorem \ref{thm:alpha_larger_than_0} and Corollary \ref{col:alpha_larger_than_0}}
\label{sec:proof_alpha_larger_than_0}
First we will prove Theorem~\ref{thm:alpha_larger_than_0}.  Let $\Omega_i \subseteq \Omega$ denote the region where arm $i$ is the best arm. Let $\Pi_{t,i}$ denote $\Pi_t(\Omega_i)$, the posterior probability that arm $i$ is the best arm. For $r > 1$, We construct the pdf of $Q_t$'s as follows: 
\begin{align}
\label{eq:Q_t_construction_alpha_larger_than_0}
q_t(m) = 
\begin{cases}
\frac{1}{r} \pi_t(m),  &\text{if } m_1 > m_2 \\
\frac{1 - \Pi_{t,1}/r}{1 - \Pi_{t,1}} \pi_t(m), &\text{ otherwise.}
\end{cases}
\end{align}

We will prove the theorem by the following steps:
\begin{itemize}
    \item In Lemma~\ref{lem:valid_distribution_alpha_larger_than_0} we show that $Q_t$'s are valid distributions. 
    \item In Lemma~\ref{lem:small_divergence1} we show that when $\alpha > 0$ the $\alpha$-divergence between $Q_t$ and $\Pi_t$ can be arbitrarily small 
    \item In Lemma~\ref{lem:regret1} we show that sampling from $Q_t$ for $\Theta(T)$ time-steps will generate linear frequentist regret, and lower bound the regret. 
\end{itemize}
Since the regret is linear, in Appendix~\ref{apx:average_regret_alpha_larger_than_0} we discuss the constant average regret per time-step as a function of $\epsilon$ and $\alpha$. 
In Appendix~\ref{apx:alpha_larger_than_0_bayesian} we provide the Bayesian regret proof for Corollary ~\ref{col:alpha_larger_than_0}.

\begin{lemma}
\label{lem:valid_distribution_alpha_larger_than_0}
$q_t(m)$ in Eq.~\ref{eq:Q_t_construction_alpha_larger_than_0} is well-defined and if $\int \pi_t(m) dm = 1$ then:
\begin{align*}
\int q_t(m) dm = 1. 
\end{align*}
\end{lemma}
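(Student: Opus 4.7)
The plan is to verify the two claims directly from the definition in Eq.~\ref{eq:Q_t_construction_alpha_larger_than_0}. The argument splits into a well-definedness check and a one-line integral computation.

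For well-definedness, the only issue is that the denominator in the second branch requires $1 - \Pi_{t,1} > 0$, i.e., $\Pi_{t,1} < 1$, at every time $t$. I would deduce this from the hypothesis of Theorem~\ref{thm:alpha_larger_than_0} that $\mathbb{P}_{\Pi_0}(M_2 > M_1) > 0$. Since the posterior $\Pi_t$ is obtained from $\Pi_0$ by Bayesian updating with likelihoods that are almost-surely positive on $\Omega$, any event with positive prior probability retains positive posterior probability; in particular $\Pi_t(\Omega_2) = 1 - \Pi_{t,1} > 0$ for every $t$, so $q_t$ is well-defined on both branches. (This is the only place the prior assumption enters the lemma.)

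For the normalization, I would split $\Omega$ into the region $\Omega_1 = \{m : m_1 > m_2\}$ and its complement and compute
\begin{align*}
\int q_t(m)\,dm
&= \int_{\Omega_1} \tfrac{1}{r}\,\pi_t(m)\,dm \;+\; \int_{\Omega\setminus\Omega_1} \frac{1 - \Pi_{t,1}/r}{1 - \Pi_{t,1}}\,\pi_t(m)\,dm \\
&= \tfrac{1}{r}\,\Pi_{t,1} \;+\; \frac{1 - \Pi_{t,1}/r}{1 - \Pi_{t,1}}\,(1 - \Pi_{t,1}) \\
&= \tfrac{1}{r}\,\Pi_{t,1} \;+\; 1 - \tfrac{1}{r}\,\Pi_{t,1} \;=\; 1,
\end{align*}
using the definition $\Pi_{t,1} = \Pi_t(\Omega_1) = \int_{\Omega_1}\pi_t(m)\,dm$.

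There is no real obstacle here; the entire content of the lemma is that the two branches of $q_t$ were scaled so that the mass $\Pi_{t,1}$ removed from $\Omega_1$ (a factor $1/r$ reduction) is exactly replaced by the excess mass added on $\Omega\setminus\Omega_1$. The calculation above makes this bookkeeping explicit. The only thing worth being careful about is stating the well-definedness argument cleanly, since nonzero posterior mass on $\Omega\setminus\Omega_1$ is what lets the construction work at every $t$ — and this is precisely what the prior assumption $\mathbb{P}_{\Pi_0}(M_2 > M_1) > 0$ guarantees.
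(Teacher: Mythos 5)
Your proposal is correct and follows essentially the same route as the paper's proof: the same direct split of the integral over $\Omega_1$ and its complement for the normalization, and the same well-definedness argument that positive prior mass on $\{M_2 > M_1\}$ combined with strictly positive likelihoods keeps $1 - \Pi_{t,1} > 0$ at every $t$. The paper merely spells out the support-preservation step ($S_0 \subseteq S_t$) slightly more explicitly.
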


\begin{lemma}
\label{lem:small_divergence1}
When $\alpha >0$, for all $\epsilon > 0$, for all $\Pi_t$,
there exists $r >1$  such that  when $Q_t$'s are constructed from $r$ as shown in  Eq.~\ref{eq:Q_t_construction_alpha_larger_than_0}, $D_{\alpha}(\Pi_t, Q_t) < \epsilon$
\end{lemma}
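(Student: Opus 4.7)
The plan is to leverage the explicit piecewise-constant structure of $q_t/\pi_t$ on $\Omega_1$ and $\Omega_2$ to reduce $D_\alpha(\Pi_t,Q_t)$ to a closed-form scalar function of $r$ and $\Pi_{t,1}$, and then show continuity at $r=1$ where the two distributions coincide.

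First, I would split the defining integral $\int \pi_t(m)^\alpha q_t(m)^{1-\alpha}\,dm$ over $\Omega_1$ and $\Omega_2$. By the construction in Eq.~\ref{eq:Q_t_construction_alpha_larger_than_0}, on each region $q_t(m) = c_i \pi_t(m)$ for a constant $c_i$ depending only on $r$ and $\Pi_{t,1}$, so $\pi_t(m)^\alpha q_t(m)^{1-\alpha} = c_i^{\,1-\alpha}\,\pi_t(m)$. Using $\int_{\Omega_1}\pi_t = \Pi_{t,1}$ and $\int_{\Omega_2}\pi_t = 1-\Pi_{t,1}$, the integral reduces to
\begin{equation*}
I(r) \;=\; \Pi_{t,1}\, r^{\alpha-1} \;+\; (1-\Pi_{t,1})^{\alpha}\,\bigl(1 - \Pi_{t,1}/r\bigr)^{1-\alpha}.
\end{equation*}
At $r=1$ this telescopes to $\Pi_{t,1} + (1-\Pi_{t,1}) = 1$, hence $D_\alpha(\Pi_t,Q_t)\bigr|_{r=1} = (1-1)/[\alpha(1-\alpha)] = 0$, as expected since $Q_t = \Pi_t$ there.

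Next, I would invoke continuity. Since the prior in Theorem~\ref{thm:alpha_larger_than_0} satisfies $\mathbb{P}_{\Pi_0}(M_2>M_1)>0$, we have $\Pi_{t,1}<1$, so $1-\Pi_{t,1}/r>0$ for all $r\ge 1$, and $I(r)$ is continuous at $r=1$. Thus, given $\alpha>0$ (with $\alpha\ne 1$) and any $\epsilon>0$, I can pick $r>1$ sufficiently close to $1$ so that $|1-I(r)|/|\alpha(1-\alpha)|<\epsilon$. Non-negativity of $D_\alpha$ then gives $D_\alpha(\Pi_t,Q_t)<\epsilon$ as required.

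The borderline $\alpha=1$ case must be handled via the limit $D_\alpha \to \mathrm{KL}(\Pi_t,Q_t)$, for which the same piecewise argument gives the closed form
\begin{equation*}
\mathrm{KL}(\Pi_t,Q_t) \;=\; \Pi_{t,1}\log r \;+\; (1-\Pi_{t,1})\log\frac{1-\Pi_{t,1}}{1-\Pi_{t,1}/r},
\end{equation*}
which is again continuous at $r=1$ and vanishes there, so the same choice of $r$ close to $1$ works. The only real obstacle is this continuity check, which is entirely routine once one confirms $\Pi_{t,1}<1$ so no singularity appears in the $\Omega_2$ piece. Everything else is a matter of plugging the explicit form of $q_t$ into the $\alpha$-divergence integral and reading off the answer.
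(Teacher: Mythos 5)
Your computation is the same as the paper's: split the integral over $\Omega_1$ and $\Omega_2$, use that $q_t/\pi_t$ is constant on each region, and reduce $D_\alpha(\Pi_t,Q_t)$ to the closed form $\frac{1}{\alpha(1-\alpha)}\bigl(1-\Pi_{t,1}r^{\alpha-1}-(1-\Pi_{t,1})^\alpha(1-\Pi_{t,1}/r)^{1-\alpha}\bigr)$ (and the analogous KL expression at $\alpha=1$); your check that $\Pi_{t,1}<1$, so the $\Omega_2$ piece is nonsingular, is exactly what Lemma~\ref{lem:valid_distribution_alpha_larger_than_0} supplies. Where you diverge is the final step. You choose $r$ by continuity of $I(r)$ at $r=1$, which makes the admissible $r$ depend on $\Pi_{t,1}$ --- and this dependence is genuine: $I(1)=1$ and $I'(1)=0$, but the second derivative of $I$ at $r=1$ contains a term of order $\alpha(1-\alpha)\Pi_{t,1}^2/(1-\Pi_{t,1})$, so as $\Pi_{t,1}\to 1$ you are forced to take $r\to 1$. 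The paper instead bounds $\bigl(\tfrac{1-\Pi_{t,1}}{1-\Pi_{t,1}/r}\bigr)^{\alpha-1}$ by $r^{\alpha-1}$ (with the sign of $\alpha(\alpha-1)$ taken into account) to get $D_\alpha(\Pi_t,Q_t)\le \frac{1-r^{\alpha-1}}{\alpha(1-\alpha)}$ for $\alpha\ne 1$ and $\le\log r$ for $\alpha=1$, bounds that contain no $\Pi_{t,1}$; hence one fixed $r>1$ works for \emph{every} posterior simultaneously. This distinction matters downstream: Theorem~\ref{thm:alpha_larger_than_0} and Lemma~\ref{lem:regret1} need a single $r$ defining the mapping $f$, so that arm~$2$ is chosen with probability at least $1-1/r$ at every time-step and the per-step regret is the constant $c\Delta(1-1/r)$. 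Your argument does prove the lemma as literally quantified (``for all $\Pi_t$ there exists $r$''), but not the uniform version the theorem actually uses; to close that gap you would need to bound $1-I(r)$ uniformly over $\Pi_{t,1}\in[0,1)$, which is essentially the paper's inequality.
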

\begin{lemma}
\label{lem:regret1}
The expected frequentist regret of the policy that constructs $Q_t$'s as in Eq.~\ref{eq:Q_t_construction_alpha_larger_than_0} and sample from $Q_t$ for $T' =\Theta(T)$ time-steps is linear and the lower bound of the average regret per time-step is
   \begin{align*}
    L =
    \begin{cases}
    c\Delta(1- (1-\epsilon \alpha(1-\alpha))^{\frac{1}{1-\alpha}}), &\text{ when } \alpha > 1 \text{ and } 0<\epsilon \\
    c\Delta(1 - \frac{1}{e^{\epsilon}}), & \text{ when } \alpha = 1 \text{ and } 0<\epsilon\\
     c\Delta(1- (1-\epsilon  \alpha(1-\alpha))^{\frac{1}{1-\alpha}}),    & \text{ when } 0< \alpha < 1\text{ and } 0 < \epsilon \leq \frac{1}{\alpha(1-\alpha)}\;. 
     \end{cases}, 
\end{align*}
where $c = \frac{T'}{T}$ is $\Theta(1)$.
\end{lemma}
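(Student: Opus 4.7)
}
My starting observation is that drawing $\widehat m \sim Q_t$ from the construction in Eq.~\eqref{eq:Q_t_construction_alpha_larger_than_0} selects arm $2$ iff $\widehat m \in \Omega_2$, which happens with probability
\begin{align*}
\int_{\Omega_2} q_t(m)\,dm \;=\; \frac{1-\Pi_{t,1}/r}{1-\Pi_{t,1}}\,(1-\Pi_{t,1}) \;=\; 1 - \Pi_{t,1}/r.
\end{align*}
Hence, writing $\Delta = m_1^* - m_2^* > 0$, the expected one-step frequentist regret at time $t$ equals $(1-\Pi_{t,1}/r)\Delta \geq (1-1/r)\Delta$. Summing this bound over the $T' = cT$ rounds in which we sample from $Q_t$ (and using the fact that regret is nonnegative on the other rounds) yields a total regret of at least $cT\Delta\,(1-1/r)$, so the average regret per time-step is at least $c\Delta(1-1/r)$. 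The remaining task is to pick $r$ as large as possible subject to $D_\alpha(\Pi_t, Q_t) < \epsilon$ for every $t$, and then to invert the relation between $r$ and $\epsilon$ to produce the three piecewise formulas.

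Next I would use a single $r$ across all rounds, determined by the divergence constraint. Splitting the integral in the definition of $D_\alpha$ across $\Omega_1$ and $\Omega_2$ and substituting the explicit form of $q_t$ gives, with $p = \Pi_{t,1}$ and $s = 1/r$,
\begin{align*}
\int \pi_t(m)^{\alpha} q_t(m)^{1-\alpha}\,dm \;=\; s^{1-\alpha} p + (1-p)^{\alpha}(1-sp)^{1-\alpha}.
\end{align*}
Call the right-hand side $h(p; s)$. My plan is to argue that as $p$ ranges over $[0,1]$ this expression is extremal at $p = 1$ in the direction that makes $D_\alpha$ largest; at $p = 1$ it collapses to $h(1;s) = s^{1-\alpha}$. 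For $0 < \alpha < 1$, where $\alpha(1-\alpha) > 0$, this requires $h$ to be minimized at $p=1$, and setting $D_\alpha = (1-s^{1-\alpha})/(\alpha(1-\alpha)) = \epsilon$ and solving gives $1/r = (1-\epsilon\alpha(1-\alpha))^{1/(1-\alpha)}$; the side condition $\epsilon \leq 1/(\alpha(1-\alpha))$ in the lemma's case statement is exactly what is needed to keep the base nonnegative. For $\alpha > 1$, where $\alpha(1-\alpha) < 0$, the same arithmetic applies with the opposite extremum of $h$ (maximum at $p=1$), and solving gives the same algebraic formula, now valid for all $\epsilon > 0$. The boundary case $\alpha = 1$ is handled by taking the KL limit directly: $D_1(\Pi_t, Q_t) = p\log r + (1-p)\log\frac{1-p}{1-sp}$ evaluates at $p = 1$ to $\log r$, so setting $\log r = \epsilon$ gives $1/r = e^{-\epsilon}$. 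Substituting these three expressions for $1/r$ into the per-step lower bound $c\Delta(1-1/r)$ produces exactly the $L$ in the statement.

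The main obstacle is the extremality claim for $h(p;s)$: I need to show that, holding $s \in (0,1)$ fixed, $h$ is minimized at $p=1$ when $0 < \alpha < 1$ and maximized at $p=1$ when $\alpha > 1$, so that the worst-case divergence over $p$ is attained there and the single choice of $r$ is admissible at every round. I would do this by differentiating $h$ in $p$, factoring out $(1-p)^{\alpha-1}(1-sp)^{-\alpha}$, and checking the sign of the residual combination in $s^{1-\alpha}$, $\alpha(1-sp)$, and $s(1-\alpha)(1-p)$ using $0<s<1$; an alternative is to view $h$ as $\int \pi^\alpha q^{1-\alpha}$ for a two-piece density and apply H\"older's inequality to upper/lower bound it against its boundary values. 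Tracking the sign of $\alpha(1-\alpha)$ is the delicate bookkeeping that glues the three cases of $L$ together. Once the per-step lower bound $(1-1/r)\Delta$ is established, multiplying by $T' = cT$ and dividing by $T$ gives the stated average regret and hence linear total regret.
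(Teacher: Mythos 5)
Your proposal is correct and follows essentially the same route as the paper: lower-bound the per-step regret by $(1-1/r)\Delta$ via $Q_t(\Omega_1)=\Pi_{t,1}/r\le 1/r$, show the divergence is worst at $\Pi_{t,1}=1$ where it equals $\frac{1-r^{\alpha-1}}{\alpha(1-\alpha)}$ (resp.\ $\log r$ for $\alpha=1$), and invert to express $r$ in terms of $\epsilon$. The extremality step you flag as the main obstacle is in fact a one-line consequence of $\frac{1-p}{1-p/r}\le 1\le r$ combined with the sign of $1-\alpha$ in the exponent, which is exactly how the paper dispatches it, so no differentiation or H\"older argument is needed.
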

\subsection{Proof of Lemma~\ref{lem:valid_distribution_alpha_larger_than_0}}
\begin{proof}
First we will show that $\Pi_{t,2}=1-\Pi_{t,1}>0$ for all $t \geq 0$, so that $q_t(m)$ is well-defined. We have $\Pi_{0,2} = \mathbb{P}(M_2>M_1)>0$ by assumption. Let $S_t= \{m \in \Omega_2: \pi_t(m) >0\}$ be the support of $\Pi_t$ in $\Omega_2$. If $\pi_0(m) >0$, then $\pi_t(m)>0$ because $\pi_t(m)$ is the product of $\pi_0(m)$ and non-zero likelihoods. Therefore $S_0 \subseteq S_t$. 

Since $\mathbb{P}(M_2 > M_1) = \int_{S_0} \pi_0(m) dm>0$, $\int_{S_0} dm >0$. Since $S_0 \subseteq S_t$, $\int_{S_t} dm >0$. Therefore $\int_{S_t} \pi_t(m) dm>0$ since $S_t= \{m \in \Omega_2: \pi_t(m) >0\}$ by definition. Then $\Pi_{t,2} = \int_{\Omega_2} \pi_t(m) dm = \int_{S_t} \pi_t(m) dm >0$.

Assume that $\int \pi_t(m) dm = 1$, we will show that $\int q_t(m) dm = 1$: 
\begin{align*}
  &\int q_t(m) dm \\ 
  &= \int_{\Omega_1}  q_t(m) dm + \int_{\Omega_2}  q_t(m) dm \\ 
  &= \int_{\Omega_1} \frac{1}{r} \pi_t(m) dm + \int_{\Omega_2}   \frac{1 - \Pi_{t,1}/r}{1 - \Pi_{t,1}} \pi_t(m) dm \\
  &= \frac{1}{r} \Pi_{t,1} + \frac{1 - \Pi_{t,1}/r}{1 - \Pi_{t,1}} \Pi_{t,2} \\
  &= \frac{1}{r} \Pi_{t,1} + \frac{1 - \Pi_{t,1}/r}{1 - \Pi_{t,1}} (1-\Pi_{t,1}) \\
  &= 1\;.
\end{align*}
\end{proof}
\subsection{Proof of Lemma~\ref{lem:small_divergence1}}
\begin{proof}
First we calculate the $\alpha$-divergence between $\Pi_t$ and $Q_t$ constructed in Eq.~\ref{eq:Q_t_construction_alpha_larger_than_0}. Let $\Omega_1 \subseteq \Omega$ denote the region where $m_1 > m_2$ and $\Omega_2 \subseteq \Omega$ denote the region where $m_2 \geq m_1$.

When $\alpha > 0, \alpha \neq 1$ we have:
\begin{align}
&D_{\alpha} (\Pi_t, Q_t) \nonumber \\
=&  \frac{1- \int \left ( \frac{\pi_t(m)}{q_t(m)}\right ) ^{\alpha}q_t(m) dm}{\alpha(1-\alpha)} \nonumber\\ 
=&  \frac{1- \int_{\Omega_1} \left ( \frac{\pi_t(m)}{q_t(m)}\right ) ^{\alpha} q_t(m) dm -\int_{\Omega_2} \left (\frac{\pi_t(m)}{q_t(m)}\right ) ^{\alpha}q_t(m) dm }{\alpha(1-\alpha)} \nonumber\\ 
=&  \frac{1- \int_{\Omega_1} \left ( r \right ) ^{\alpha}q_t(m) dm - \int_{\Omega_2} \left ( \frac{1 - \Pi_{t,1}}{1 - \Pi_{t,1}/r} \right ) ^{\alpha}q_t(m) dm}{\alpha(1-\alpha)} \nonumber\\ 
=&  \frac{1- Q_t(\Omega_1)\left ( r \right ) ^{\alpha} - Q_t(\Omega_2) \left ( \frac{1 - \Pi_{t,1}}{1 - \Pi_{t,1}/r} \right ) ^{\alpha}}{\alpha(1-\alpha)} \nonumber\\ 
=&  \frac{1- \frac{\Pi_{t,1}}{r}\left ( r \right ) ^{\alpha} - (1 - \frac{\Pi_{t,1}}{r}) \left ( \frac{1 - \Pi_{t,1}}{1 - \Pi_{t,1}/r} \right ) ^{\alpha}}{\alpha(1-\alpha)} \nonumber\\ 
\label{eq:max_d_alpha_large}
=& \frac{1}{\alpha(1-\alpha)} \left ( 1 -  \Pi_{t,1} r^{-1+\alpha}- (1-\Pi_{t,1})^{\alpha} (1-\frac{\Pi_{t,1}}{r})^{1-\alpha} \right)\;.
\end{align}
When $\alpha = 1$:
\begin{align*}
   & D_{\alpha} (\Pi_t, Q_t) \\
=&  \int \pi_t(m) \log \left ( \frac{\pi_t(m)}{q_t(m)} \right ) dm \\ 
=& \int_{\Omega_1} \pi_t(m) \log \frac{\pi_t(m)}{q_t(m)} dm + \int_{\Omega_2} \pi_t(m) \log  \frac{\pi_t(m)}{q_t(m)} \ dm \\
=& \int_{\Omega_1} \pi_t(m) \log(r) dm \\&+ \int_{\Omega_2} \pi_t(m) \log \frac{1 - \Pi_{t,1}}{1 - \Pi_{t,1}/r} dm \\
=& \Pi_{t,1}\log(r) + (1-\Pi_{t,1})\log \frac{1 - \Pi_{t,1}}{1 - \Pi_{t,1}/r}\;.
\end{align*}
We will now upper bound the above expression. Consider 2 cases
\begin{itemize}
    \item $\alpha =1$:   We have
\begin{align*}
   & D_{\alpha} (\Pi_t, Q_t) \\
&= \Pi_{t,1}\log(r) + (1-\Pi_{t,1})\log \frac{1 - \Pi_{t,1}}{1 - \Pi_{t,1}/r} \\
&\leq \Pi_{t,1}\log(r) + (1-\Pi_{t,1})\log(r) \text{ because } r>1\\
&\leq \log(r)\;.
\end{align*}

    \item $\alpha > 0, \alpha \neq 1$: 
The following inequality is true by simple calculations when $0< \alpha < 1$ or $\alpha > 1$: 
    \begin{align}
    \label{eq:upper_bound_error}
    \frac{\left (\frac{ 1-\Pi_{t,1}}{1-\frac{\Pi_{t,1}}{r}} \right ) ^{\alpha-1} }{\alpha(\alpha-1)} \leq \frac{ r^{\alpha-1}}{\alpha(\alpha -1 )}\;. 
\end{align}

Then we will have: 
    \begin{align*}
  &D_{\alpha} (\Pi_t, Q_t)  \\
  = &\frac{\Pi_{t,1} r^{\alpha-1} + (1-\Pi_{t,1})\left (\frac{ 1-\Pi_{t,1}}{1-\frac{\Pi_{t,1}}{r}} \right ) ^{\alpha-1} -1}{\alpha(\alpha-1 )}  \\
 \leq & \frac{1}{\alpha(\alpha-1 )} \left (  \Pi_{t,1} r^{\alpha-1} + (1-\Pi_{t,1}) r^{\alpha-1} -1 \right) \\
 = &  \frac{1}{\alpha(\alpha-1 )} \left (   r^{-1+\alpha}  -1 \right)\;.
\end{align*}

%
%
\end{itemize}
Therefore $D_{\alpha} (\Pi_t, Q_t)$ is upper bounded by:
\begin{align}
\label{eq:large_alpha}
    \begin{cases}
    \frac{1- {r}^{\alpha-1}}{\alpha(1-\alpha)}, &\text{ if }  0 < \alpha <1  \text{ or } \alpha > 1\\
    \log(r), &\text{ if } \alpha = 1\;.
    \end{cases}
\end{align}
Since $\lim_{r \rightarrow 1} \log(r) = 0$ and $ \lim_{r \rightarrow 1} \frac{1- r^{-1+\alpha}}{\alpha(1-\alpha)} =0$, for any $\epsilon > 0$, there exists $r > 1$ such that
\begin{align*}
D_{\alpha} (\Pi_t, Q_t) \leq \epsilon\;.
\end{align*}
\end{proof}
\subsection{Proof of Lemma~\ref{lem:regret1}}
\begin{proof}
We will now lower bound the regret as a function of $\epsilon$. 

For any posterior $\Pi_t$, since the approximate algorithm sampling from $Q_t$ picks the optimal arm with probability at most $1/r$ it then picks arm $2$ with probability at least $1-1/r$. 


Since we sample from $Q_t$ for $T'$ time steps, the lower bound of the average expected regret per time step is :
\begin{align*}
    L =\frac{T'}{T} (m^*_1 - m^*_2)(1- 1/r) = c \Delta(1-1/r)\;.
\end{align*}
where $\Delta = m^*_1 - m^*_2$ and $c=\frac{T'}{T}$ is $\Theta(1)$.

We calculate $\epsilon$ as a function of $r$ from Eq.~\ref{eq:large_alpha}:
\begin{align*}
    \epsilon &=  
    \begin{cases}
    \frac{1- r^{-1+\alpha}}{\alpha(1-\alpha)}, &\text{ if } \alpha \neq 1 \\
    log(r), &\text{ if } \alpha = 1\;.
    \end{cases}
\end{align*}
 The functions are continous when $r>1$. Then by direct calculations when $r\rightarrow \infty$ and $r\rightarrow 1$, the domain of $\epsilon$ is:
 \begin{align*}
0 &< \epsilon \text{ when } \alpha \geq 1\;. \\
0 &< \epsilon < \frac{1}{\alpha(1-\alpha)} \text{ when } 0<\alpha <1\;. 
 \end{align*}
Then
\begin{align*}
    r =
    \begin{cases}
    (1 - \epsilon \alpha (1-\alpha))^{\frac{1}{-1+\alpha}} &\text{ when } \alpha > 1 \text{ and } 0<\epsilon \\
   e^{\epsilon}&\text{ when } \alpha = 1 \text{ and } 0<\epsilon \\
    (1 - \epsilon \alpha (1-\alpha))^{\frac{1}{-1+\alpha}} & \text{ when } 0< \alpha < 1 \text{ and } 0 < \epsilon \leq \frac{1}{\alpha(1-\alpha)}. 
    \end{cases}
\end{align*}
Therefore we can calculate the lower bound of the regret per time-step as:
   \begin{align*}
   L =
   \begin{cases}
   c\Delta(1- (1-\epsilon \alpha(1-\alpha))^{\frac{1}{1-\alpha}}), &\text{ when } \alpha > 1 \text{ and } 0<\epsilon \\
   c\Delta(1 - \frac{1}{e^{\epsilon}}), & \text{ when } \alpha = 1 \text{ and } 0<\epsilon\\
   c\Delta(1- (1-\epsilon  \alpha(1-\alpha))^{\frac{1}{1-\alpha}}),    & \text{ when } 0< \alpha < 1 \text{ and } 0 < \epsilon \leq \frac{1}{\alpha(1-\alpha)}. 
   \end{cases}.
   \end{align*}
We plot the lower bound of the average regret per time step when $\Delta = 0.1$ as a function of $\epsilon$ in Fig~\ref{fig:regret_epsilon1}. 

\end{proof}
\subsection{The Average Regret per Time-step}
\label{apx:average_regret_alpha_larger_than_0} 
To understand how the constant average regret per time-step depends on $\epsilon$ and $\alpha$, we plot in Figure~\ref{fig:regret_epsilon1} the lower bound of the average regret per time-step in Lemma~\ref{lem:regret1} as a function of $\epsilon$ in the following setting of the example constructed in the proof of Theorem~\ref{thm:alpha_larger_than_0}. The algorithm samples from $Q_t$ at $T/2$ time-steps and $\Delta = 0.1$. In this case the average regret per time step is upper bounded by $\Delta/2 = 0.05$. The formula and proof are detailed in Lemma~\ref{lem:regret1} in Appendix~\ref{sec:proof_alpha_larger_than_0}. 
\begin{figure}[ht]
	\centering
	\includegraphics[width=\columnwidth, height = 5cm, keepaspectratio]{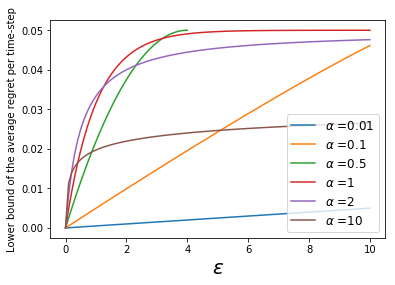}
	\caption{\label{fig:regret_epsilon1} Lower bound of regret per time-step as a function of $\epsilon$ when $m^*_1 - m^*_2 = 0.1$ and we sample from the approximation for $T/2$ time-steps in the example construted in the proof of Theorem~\ref{thm:alpha_larger_than_0}. When $\alpha$ is around $1$, the lower bound converges quickly as $\epsilon$ increases.}
\end{figure}
When $\alpha$ is around $1$, the lower bound, and therefore the average regret per time-step, converges the fastest to $\Delta/2$ as $\epsilon$ increases. When $\alpha$ is very large or close to $0$, the lower bound grows slowly as $\epsilon$ increases.
\subsection{Proof of Corollary~\ref{col:alpha_larger_than_0}}
\label{apx:alpha_larger_than_0_bayesian}
Since $\mathbb{P}(M_1 > M_2 ) >0$, there exist constants $\Delta>0, \gamma > 0$ such that $\mathbb{P}(M_1 - M_2 \geq \Delta) = \gamma$. The probability that the assumption $m^*_1>m^*_2$ in Theorem~\ref{thm:alpha_larger_than_0} is satisfied is at least $\gamma>0$. Therefore the expected regret over the prior is at least $\gamma$ times the frequentist regret in Theorem~\ref{thm:alpha_larger_than_0}, which is linear. 

\section{Proof of Theorem \ref{thm:alpha_less_than_1_example} and Corollary \ref{col:alpha_less_than_1_example}}
\label{sec:proof_alpha_less_than_1}
First we will prove Theorem~\ref{thm:alpha_less_than_1_example}. Let $\Pi_{t,i}$ denote $\Pi_t(\Omega_i)$. We construct the pdf of $Q_t$'s as follows: 
\begin{align}
\label{eq:Q_t_construction_alpha_less_than_1}
q_t(m) = 
\begin{cases}
\frac{1}{\Pi_{t,2}} \pi_t(m), \text{ if } m_2 > m_1 \\
0, \text{ otherwise}. 
\end{cases}
\end{align}

We will prove the theorem by the following steps:
\begin{itemize}
    \item In Lemma~\ref{lem:valid_distribution_alpha_smaller_than_1} we show that $Q_t$'s are valid distributions. 
    \item In Lemma~\ref{lem:regret_alpha_less_than_1} we show that $Q_t$ has linear frequentist regret, and calculate the constant average regret per time-step.
    \item In Lemma~\ref{lem:small_divergence_alpha_less_than_1} we show that there exists a bad prior such that the $\alpha$-divergence between $Q_t$ and $\Pi_t$ can be arbitrarily small.
\end{itemize}
In Appendix~\ref{apx:prior_alpha_less_than_1} we discuss the prior-dependent error threshold $\epsilon$ that will cause linear regret. 
In Appendix~\ref{apx:alpha_less_than_1_bayesian} we provide the Bayesian regret proof for Corollary~\ref{col:alpha_less_than_1_example}. 
\begin{lemma}
\label{lem:valid_distribution_alpha_smaller_than_1}
 $q_t(m)$  in Eq.~\ref{eq:Q_t_construction_alpha_less_than_1} is well-defined and if $\int \pi_t(m) dm = 1$ then:
\begin{align*}
\int q_t(m) dm = 1. 
\end{align*}
\end{lemma}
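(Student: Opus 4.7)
The plan is to follow the same template as the proof of Lemma~\ref{lem:valid_distribution_alpha_larger_than_0}. Two things must be checked: first, that the normalizing constant $\Pi_{t,2}$ in the piecewise definition of $q_t$ is strictly positive (so that $q_t$ is well-defined), and second, that $q_t$ integrates to $1$.

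For the well-definedness, I would argue that $\Pi_{t,2} > 0$ at every $t \geq 0$ via a support argument. By hypothesis, $\mathbb{P}_{\Pi_0}(M_2 > M_1) = z > 0$, so the set $S_0 = \{m \in \Omega_2 : \pi_0(m) > 0\}$ has positive Lebesgue measure. Because the posterior density is proportional to the prior times the product of likelihood factors for the observed rewards, and these likelihood factors are strictly positive on the parameter space for the reward families considered in the paper, the support of $\pi_t$ contains the support of $\pi_0$. Hence $S_0 \subseteq S_t := \{m \in \Omega_2 : \pi_t(m) > 0\}$, and therefore
\begin{equation*}
\Pi_{t,2} = \int_{\Omega_2} \pi_t(m)\,dm \geq \int_{S_0} \pi_t(m)\,dm > 0,
\end{equation*}
so dividing by $\Pi_{t,2}$ is legitimate.

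For the normalization, I would split the integral into its $\Omega_1$ and $\Omega_2$ pieces. The $\Omega_1$ piece vanishes because $q_t \equiv 0$ there by construction, and the $\Omega_2$ piece evaluates directly:
\begin{equation*}
\int q_t(m)\,dm = \int_{\Omega_2} \frac{1}{\Pi_{t,2}} \pi_t(m)\,dm = \frac{\Pi_{t,2}}{\Pi_{t,2}} = 1.
\end{equation*}

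There is no serious obstacle here; the only delicate point is the strict positivity of $\Pi_{t,2}$, which reduces to checking that likelihoods do not annihilate mass on $\Omega_2$. This is exactly the style of argument already carried out in Lemma~\ref{lem:valid_distribution_alpha_larger_than_0}, so I would explicitly reuse the reasoning from there rather than redo it in full detail.
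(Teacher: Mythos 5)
Your proposal is correct and matches the paper's proof essentially verbatim: the paper also establishes $\Pi_{t,2}>0$ by the same support argument (invoking the reasoning from Lemma~\ref{lem:valid_distribution_alpha_larger_than_0}, exactly as you propose to reuse it) and then verifies normalization by splitting the integral over $\Omega_1$ and $\Omega_2$. No gaps.
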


\begin{lemma}
	\label{lem:regret_alpha_less_than_1}
	$Q_t$ constructed in Eq.~\ref{eq:Q_t_construction_alpha_less_than_1} chooses arm $2$ at all time-steps. The average frequentist regret per time-step is $\Delta = m^*_1-m^*_2$. 
\end{lemma}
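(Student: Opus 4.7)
The plan is to verify two assertions: (i) sampling from $Q_t$ always selects arm $2$, and (ii) this yields a per-step regret equal to $\Delta = m^*_1 - m^*_2$. Both follow almost directly from inspecting the construction in Eq.~\ref{eq:Q_t_construction_alpha_less_than_1}, and the work lies mainly in checking that $Q_t$ is well-posed so that ``sampling from $Q_t$'' makes sense at every round.

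First I would invoke Lemma~\ref{lem:valid_distribution_alpha_smaller_than_1} to conclude that $q_t(m)$ in Eq.~\ref{eq:Q_t_construction_alpha_less_than_1} is a valid probability density, which in particular requires $\Pi_{t,2} = \Pi_t(\Omega_2) > 0$ for all $t$. The latter is established in essentially the same way as in the proof of Lemma~\ref{lem:valid_distribution_alpha_larger_than_0}: the prior assumption in Theorem~\ref{thm:alpha_less_than_1_example} (together with the independence of $M_2$ and $M_1 - M_2$) guarantees $\Pi_0(\Omega_2) > 0$, and the Bayes update at each round multiplies $\pi_0$ by strictly positive likelihoods, so the intersection of the support of $\Pi_t$ with $\Omega_2$ retains positive Lebesgue measure, giving $\Pi_{t,2} > 0$.

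Next I would argue the arm-selection claim. By construction, $q_t(m) = 0$ whenever $m \in \Omega_1$, i.e.\ whenever $m_1 > m_2$. Hence any sample $\widehat{m} \sim Q_t$ lies almost surely in $\Omega_2$, and by the Thompson-sampling rule $A_t = i \text{ if } \widehat{m} \in \Omega_i$, we obtain $A_t = 2$ with probability $1$ at every time-step $t$. This holds independently of the history $H_{t-1}$, so the claim that arm $2$ is chosen at \emph{every} round is immediate.

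Finally, substituting $A_t = 2$ at every $t$ into the regret definition in Eq.~\ref{eq:regret} gives $\mathrm{Regret}(T, \pi, m^*) = T m^*_1 - T m^*_2 = T \Delta$, so the average regret per time-step is exactly $\Delta$. I do not anticipate any real obstacle here; the only point that is not wholly automatic is strict positivity of $\Pi_{t,2}$ at every round, but as noted above this follows from the prior support assumption together with the multiplicative nature of Bayesian updates. The heavy lifting for this section of the paper is instead in the companion Lemma~\ref{lem:small_divergence_alpha_less_than_1}, which controls the $\alpha$-divergence and is where the prior structure will really be used.
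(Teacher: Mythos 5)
Your proof is correct and matches the paper's (very brief) argument: arm $2$ is chosen with probability $1$ since $q_t$ vanishes on $\Omega_1$, giving average per-step regret $\Delta = m^*_1 - m^*_2$. The well-definedness of $Q_t$ that you carefully check is delegated in the paper to the companion Lemma~\ref{lem:valid_distribution_alpha_smaller_than_1}, so nothing is missing.
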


\begin{lemma}
\label{lem:small_divergence_alpha_less_than_1}
Let $\alpha < 1$,  $M_1-M_2$ and $M_2$ be independent and arm $2$ be chosen at all time-steps before $t$. 

For any $ \epsilon > 0$, there exists $0<z\leq 1$ such that if $\Pi_{0,2}=z$ then $D_{\alpha} (\Pi_t, Q_t) < \epsilon$ where $Q_t$ is constructed in Eq.~\ref{eq:Q_t_construction_alpha_less_than_1}.

For any $0<z\leq 1$, there exists $\epsilon >0 $ such that if $\Pi_{0,2}=z$  then $D_{\alpha} (\Pi_t, Q_t) < \epsilon$ where $Q_t$ is constructed in Eq.~\ref{eq:Q_t_construction_alpha_less_than_1}.
\end{lemma}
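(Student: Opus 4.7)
The plan is to first compute $D_{\alpha}(\Pi_t, Q_t)$ in closed form, then use the independence assumption to argue that $\Pi_{t,2}$ stays fixed at $z$ for every $t$, and finally conclude both statements by choosing $z$ (respectively $\epsilon$) appropriately.

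First I would plug the construction $q_t(m) = \pi_t(m)/\Pi_{t,2}$ on $\Omega_2$, $0$ elsewhere, into the definition of $\alpha$-divergence. Because $q_t$ vanishes outside $\Omega_2$, the integral collapses:
\begin{align*}
\int \pi_t(m)^{\alpha} q_t(m)^{1-\alpha}\,dm = \Pi_{t,2}^{\alpha-1}\int_{\Omega_2}\pi_t(m)\,dm = \Pi_{t,2}^{\alpha},
\end{align*}
so $D_{\alpha}(\Pi_t, Q_t) = (1 - \Pi_{t,2}^{\alpha})/(\alpha(1-\alpha))$ for $\alpha \neq 0$, and $\mathrm{KL}(Q_t,\Pi_t) = -\log \Pi_{t,2}$ when $\alpha=0$. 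In all cases with $\alpha < 1$ the divergence depends only on $\Pi_{t,2}$ and $\alpha$, is continuous in $\Pi_{t,2}$, and tends to $0$ as $\Pi_{t,2} \to 1$.

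Next I would show $\Pi_{t,2} = z$ for every $t$ under the hypotheses of the lemma. Write $X = M_1 - M_2$ and $Y = M_2$, so the prior factorizes as $\pi_0(x,y) = \pi_0(x)\pi_0(y)$ by the independence assumption. Since arm $2$ has been pulled at every step before $t$, the likelihood depends on $Y$ only: $p(H_{t-1}\mid M_1,M_2) = p(H_{t-1}\mid Y)$. Hence the joint posterior factorizes as $\pi_t(x,y) \propto \pi_0(x)\cdot p(H_{t-1}\mid y)\pi_0(y)$, so marginalizing out $y$ gives that the posterior marginal of $X$ equals the prior marginal of $X$. In particular, $\Pi_{t,2} = \mathbb{P}(X<0\mid H_{t-1}) = \mathbb{P}(X<0) = z$.

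Combining the two steps, $D_{\alpha}(\Pi_t, Q_t)$ is a single constant $C(\alpha,z)$ independent of $t$. For the first claim, $C(\alpha,z) \to 0$ as $z \to 1$ by Step~1, so for any $\epsilon>0$ we can pick $z\in(0,1]$ close enough to $1$ to make $C(\alpha,z) < \epsilon$. For the second claim, for any fixed $z\in(0,1]$ the value $C(\alpha,z)$ is a finite positive number, so any $\epsilon$ strictly greater than $C(\alpha,z)$ works. The main obstacle is Step~2: the observation that pulling only arm $2$ cannot update the marginal of $M_1 - M_2$ is where the prior independence hypothesis is essential. Without it, the posterior on $M_1 - M_2$ would generically update and $\Pi_{t,2}$ would drift, breaking the construction; the divergence computation and the continuity argument are routine by comparison.
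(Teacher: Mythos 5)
Your proposal is correct and follows essentially the same route as the paper: the same closed-form computation $D_{\alpha}(\Pi_t,Q_t)=(1-\Pi_{t,2}^{\alpha})/(\alpha(1-\alpha))$ (resp.\ $\log(1/\Pi_{t,2})$ at $\alpha=0$), and the same key observation that pulling only arm $2$ leaves the marginal of $M_1-M_2$ untouched, so $\Pi_{t,2}=z$ for all $t$ (the paper phrases this via conditional independence of $M_1-M_2$ and $H_{t-1}$ given $M_2$ rather than your density factorization, but the content is identical). The only thing worth making explicit is that the vanishing of the $\Omega_1$ contribution uses $1-\alpha>0$, i.e.\ the hypothesis $\alpha<1$, which the paper flags and you use implicitly.
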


\subsection{Proof of Lemma~\ref{lem:valid_distribution_alpha_smaller_than_1}}
\begin{proof}
Similar to the proof of Lemma~\ref{lem:valid_distribution_alpha_larger_than_0}, we have that $\Pi_{t,2} >0$ for all $t\geq 0$. 

Assume that $\int \pi_t(m) dm = 1$, we will show that $\int q_t(m) dm = 1$: 
\begin{align*}
  &\int q_t(m) dm \\ 
  &= \int_{\Omega_1}  q_t(m) dm + \int_{\Omega_2}  q_t(m) dm \\ 
  &= 0 + \int_{\Omega_2}   \frac{1}{ \Pi_{t,2}} \pi_t(m) dm \\
  &= \frac{1}{ \Pi_{t,2}} \int_{\Omega_2}    \pi_t(m) dm \\
  &= 1\;.
\end{align*}
\end{proof}

\subsection{Proof of Lemma~\ref{lem:regret_alpha_less_than_1}}
\begin{proof}
	Under the approximate distribution, arm $2$ is chosen with probability $1$ at all times. 
	Clearly this approximate distribution has linear regret, with $\Delta=m^*_1 - m^*_2$ being the average regret per time-step.
\end{proof}

    \subsection{Proof of Lemma~\ref{lem:small_divergence_alpha_less_than_1}}
\begin{proof} 
Let $D = M_1 - M_2$ which is independent of $M_2$ by the assumption. Let  $f$ denote the pdf. Since the algorithm always picks arm $2$, $H_{t-1}$ and $M_1$ are independent given $M_2$. Therefore for all $m_1, m_2$ and $h$, $f_{M_1|M_2,H_{t-1}}(m_1 |m_2, h) = f_{M_1 |M_2}(m_1|m_2) $. 

Since $D = M_1-M_2$, we have $f_{D|M_2,H_{t-1}}(m_1-m_2 |m_2, h)  = f_{M_1|M_2,H_{t-1}}(m_1 |m_2, h) $. Therefore for all $d, m_2$ and $h$:
\begin{align*}
f_{D|M_2,H_{t-1}}(d |m_2, h)  = f_{M_1|M_2,H_{t-1}}(m_2+d |m_2, h) = f_{M_1 |M_2}(m_2+d|m_2) =f_{D |M_2}(d|m_2)\;.
\end{align*}                                       
Since $f_{D|M_2,H_{t-1}}(d |m_2, h)  =f_{D |M_2}(d|m_2)$ for all $d, m_2$ and $h$,  $D$ and $H_{t-1}$ are independent given $M_2$. Then
\begin{align*}
&f_{D |M_2, H_{t-1}}(d|m_2, h)  \\ =  &f_{D |M_2}(d|m_2) \text{ because $D$ and $H_{t-1}$ are independent given $M_2$} \\ = &f_{D}(d) \text{ because $D$ and $M_2$ are independent.} 
\end{align*}
Now we will show that $D$ and $H_{t-1}$ are independent. For all $d$ and $h$: 
\begin{align*}
&f_{D | H_{t-1}}(d|h) \\
=&\int f_{D, M_2| H_{t-1}} (d, m_2|h) dm_2\\
= &\int f_{D| M_2, H_{t-1}} (d| m_2,h)  f_{ M_2| H_{t-1}} ( m_2|h) dm_2 \\
=  &\int f_{D}(d)  f_{ M_2| H_{t-1}} ( m_2|h) dm_2  \\
=  &f_{D}(d) \int f_{ M_2| H_{t-1}} ( m_2|h) dm_2  \\
=  & f_{D}(d)\;. 
\end{align*}
Since $D$ and $H_{t-1}$ are independent, at all times $t$ the posterior does not concentrate: 
\begin{align*}
\Pi_{t,2}& =\mathbb{P}(M_1-M_2 < 0 |H_{t-1}) = \mathbb{P}(M_1< M_2)\;.
\end{align*}
For simplicity let
\begin{align*}
	z:=\mathbb{P}(M_1< M_2)\;.
\end{align*}
We will show that  $D(\Pi_t, Q_t)$ is small if $z$ is large enough. First we calculate the $\alpha$-divergence between $\Pi_t$ and $Q_t$ constructed in Eq~\ref{eq:Q_t_construction_alpha_less_than_1}. 

When $\alpha <1, \alpha \neq 0$: 
\begin{align*}
&D_{\alpha} (\Pi_t, Q_t)  \\
=&  \frac{1- \int \left ( \frac{q_t(m)}{\pi_t(m)}\right ) ^{1-\alpha} \pi_t(m) dm}{\alpha(1-\alpha)} \\ 
=&  \frac{1 - \int_{\Omega_1} \left ( \frac{q_t(m)}{\pi_t(m)}\right ) ^{1-\alpha} \pi_t(m) dm -\int_{\Omega_2} \left ( \frac{q_t(m)}{\pi_t(m)}\right ) ^{1-\alpha} \pi_t(m) dm }{\alpha(1-\alpha)} \\ 
=&  \frac{1- 0  - \int_{\Omega_2} \left ( \frac{1}{\Pi_{t,2}}\right ) ^{1-\alpha} \pi_t(m) dm}{\alpha(1-\alpha)} \text{ since } \alpha < 1 \\
=&  \frac{1- \left (\frac{1}{\Pi_{t,2}} \right ) ^{1-\alpha}\int_{\Omega_2}   \pi_t(m) dm}{\alpha(1-\alpha)} \\
=&  \frac{1- \left (\frac{1}{\Pi_{t,2}} \right ) ^{1-\alpha} \Pi_{t,2}}{\alpha(1-\alpha)} \\
=&  \frac{1 - (\Pi_{t,2}) ^{\alpha}}{\alpha(1-\alpha)} \\ 
=& \frac{1-z^{\alpha}}{\alpha(1-\alpha)}\;.
\end{align*}
When $\alpha = 0$: 
\begin{align*}
    &D_{\alpha} (\Pi_t, Q_t) \\ =& \int q_t(m) \log  \frac{q_t(m)}{\pi_t(m)} dm \\
    =& \int_{\Omega_1} q_t(m) \log  \frac{q_t(m)}{\pi_t(m)} dm  \\ &+ \int_{\Omega_2} q_t(m) \log \frac{q_t(m)}{\pi_t(m)} dm  \\
    =& \int_{\Omega_1} 0 \log(0)dm + \int_{\Omega_2} q_t(m) \log \frac{1}{\Pi_{t,2}}dm \\
    = &0 + 1 \log  \frac{1}{\Pi_{t,2}} = \log  \frac{1}{\Pi_{t,2}}= \log  \frac{1}{z}\;.
\end{align*}
Note that if we don't have the condition on the prior such that picking arm $2$ does not help to learn which arm is the better one, $\Pi_{t,2}$ may converge to $0$, making $D_{\alpha} (\Pi_t, Q_t)$ goes to $\infty$ when $\alpha \leq 0 $. But since $\Pi_{t,2}=z$, we will now show that for any $\alpha <1$, for any $\epsilon >0$, there exists $z (0 <z <1)$ such that  
\begin{align*}
    D_{\alpha} (\Pi_t, Q_t) < \epsilon\;.
\end{align*}
Consider the 2 cases
\begin{itemize}
    \item When $\alpha < 1, \alpha \neq 0$: Since
    \begin{align*}
   \lim_{z \rightarrow 1}  \frac{1-z^{\alpha}}{\alpha(1-\alpha)} =0\;.
   \end{align*}
   Then for any $\epsilon > 0 $ there exists $ 0 < z<1$ such that $D_{\alpha}(\Pi_t, Q_t) < \epsilon$.
   For any $0<z< 1 $ there exists $\epsilon > 0$ such that $D_{\alpha}(\Pi_t, Q_t) < \epsilon$. 
   
   \item  When $\alpha = 0$: 
    \begin{align*}
    D_{\alpha} (\Pi_t, Q_t) &=   \log  \frac{1}{z}\;.
\end{align*}
Since $\lim_{z \rightarrow 1} \log(1/z) = 0 $, for any $\epsilon > 0 $ there exists $0 < z< 1$ such that $D_0(\Pi_t, Q_t) < \epsilon$. For any $z< 1 $ there exists $\epsilon > 0$ such that $D_{\alpha}(\Pi_t, Q_t) < \epsilon$. 
\end{itemize}
\end{proof}
\subsection{Prior-dependent Error Threshold for Linear Frequentist Regret}
\label{apx:prior_alpha_less_than_1} 
In the example constructed in the previous sections, the $\alpha$-divergence between $\Pi_t$ and $Q_t$ can be calculated as:
$   \epsilon = 
\begin{cases}
\frac{1- z^\alpha}{\alpha(1-\alpha)}, \text{ if } 0 < \alpha < 1 \text{ or } \alpha <0 \\
\log\frac{1}{z}, \text{ if } \alpha = 0 \\
\end{cases}\;.$

\begin{figure}[h]
	\centering
	\begin{subfigure}[t]{0.5\columnwidth}
		\raggedleft
		\captionsetup{width=.9\linewidth}
		\includegraphics[width=1\columnwidth]{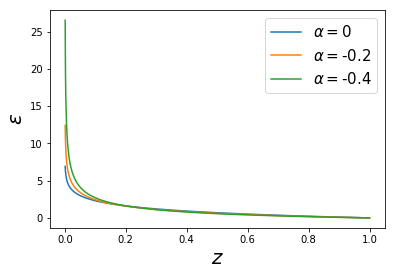}
		\caption{$D_{\alpha} (\Pi_t, Q_t) =\epsilon $ as a function of $z$ when $\alpha \leq 0$. When $z$ is very small and $\alpha$ is small, $\epsilon$ needs to be very large. When $z>0.2$, there is a threshold of $\epsilon$ which is less than $8$ that can cause linear regret. }
	\end{subfigure}%
	~ 
	\begin{subfigure}[t]{0.5\columnwidth}
		\raggedright 
		\captionsetup{width=.9\linewidth}
		\includegraphics[width=1\columnwidth]{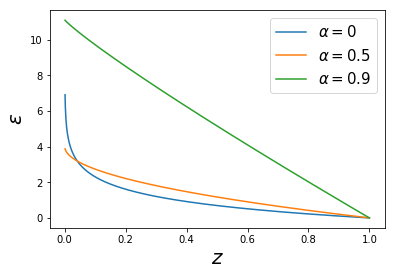}
		\caption{$\epsilon$ as a function of $z$ when $0\leq \alpha < 1$. There is a threshold of $\epsilon$ which is less than $8$ for each value of $z$ that can cause linear regret..}
	\end{subfigure}
	\caption{\label{fig:small_alpha}$\epsilon$ as a function $z$ that makes the regret linear for different values of $\alpha$ for the example constructed in the proof of Theorem~\ref{thm:alpha_less_than_1_example}.}
\end{figure}
In Figure~\ref{fig:small_alpha}, we show the values of $\epsilon$ as a function of $z$ that will make the regret linear for different values of $\alpha$. We can see that for both cases when $\alpha \leq 0$ and $0 \leq \alpha <1$, and $z$ is not too small, there is a threshold of $\epsilon$ for each value of $z$ that makes the regret linear. For each value of $z$, if the error is smaller than the threshold we hypothesize that the regret might become sub-linear. However even if that is the case, it is not possible to calculate the exact threshold for more complicated priors. Therefore in Section~\ref{subsec:uniform} we propose an algorithm that is guaranteed to have sub-linear regret for any $\epsilon$ and any $z$ when $\alpha \leq 0$.

\subsection{Proof of Corollary~\ref{col:alpha_less_than_1_example}}
\label{apx:alpha_less_than_1_bayesian}
Since $\mathbb{P}(M_1 > M_2 ) >0$, there exist constants $\Delta>0, \gamma > 0$ such that the $\mathbb{P}(M_1 - M_2 \geq \Delta) = \gamma$. The probability that the assumption $m^*_1>m^*_2$ in Theorem~\ref{thm:alpha_less_than_1_example} is satisfied is at least $\gamma>0$. Therefore the expected regret over the prior is at least $\gamma$ times the frequentist regret in Theorem~\ref{thm:alpha_less_than_1_example}, which is linear. 

	\section{Proof of Lemma~\ref{lem:convert_distribution}}
	\label{sec:proof_convert_distribution}

%
%

To convert between $D_{\alpha}(\Pi_t, Q_t)$ and $D_{\alpha}(\overline{\Pi_t } , \overline{Q_t})$ we first prove the following lemma:
	\begin{lemma}[Jensen's Inequality]
		\label{lem:Jensen}
		Let $f: \mathcal{R}^2 \rightarrow \mathcal{R}$ be a convex function. Let $P: \mathcal{R}^k \rightarrow \mathcal{R}$ and $Q: \mathcal{R}^k \rightarrow \mathcal{R}$ be 2 functions. Let $S$ is a subset of  $R^k$, the domain of $x$ and $|S|$ denote the volume of $S$. Then
		\begin{align}
		\label{eq:Jensen}
		&\frac{1}{|S|} \int_S  f(P(x),Q(x)) dx  \nonumber\\&\geq f \left (  \frac{1}{|S|} \int_S P(x) dx , \frac{1}{|S|} \int_S Q(x) dx \right )\;.
		\end{align}
	\end{lemma}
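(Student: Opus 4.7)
The statement is simply the standard multivariate Jensen's inequality applied to a convex function $f:\mathbb{R}^2\to\mathbb{R}$ with the uniform probability measure on $S$, so the plan is to reduce to a textbook form rather than invent anything new. The key observation is that $\mu(A):=|A|/|S|$ defines a probability measure on $S$, under which $(P(X),Q(X))$ is a random vector in $\mathbb{R}^2$ whose expectation is exactly the pair $(\bar P,\bar Q)$ with $\bar P=\tfrac{1}{|S|}\int_S P(x)\,dx$ and $\bar Q=\tfrac{1}{|S|}\int_S Q(x)\,dx$, and the left-hand side of \eqref{eq:Jensen} is exactly $\mathbb{E}_\mu[f(P(X),Q(X))]$.

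The cleanest self-contained way to carry this out is via the supporting hyperplane characterization of convex functions on $\mathbb{R}^2$. First I would invoke the fact that for any convex $f$ and any point $(a,b)$ in its domain there exist constants $c_1,c_2$ (a subgradient) such that $f(u,v)\ge f(a,b)+c_1(u-a)+c_2(v-b)$ for all $(u,v)$. Then I would specialize $(a,b)=(\bar P,\bar Q)$ and substitute $(u,v)=(P(x),Q(x))$, yielding the pointwise bound $f(P(x),Q(x))\ge f(\bar P,\bar Q)+c_1(P(x)-\bar P)+c_2(Q(x)-\bar Q)$. Integrating this inequality over $x\in S$ and dividing by $|S|$ makes the two affine correction terms vanish, since $\tfrac{1}{|S|}\int_S(P(x)-\bar P)\,dx=0$ and similarly for $Q$, leaving exactly the claimed inequality.

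As an alternative I could simply quote the general Jensen inequality for convex functions of integrable $\mathbb{R}^2$-valued random vectors under any probability measure; specializing that measure to uniform on $S$ gives the result immediately. Both routes require only that the integrals $\bar P,\bar Q$ and $\int_S f(P,Q)$ exist, which is implicit in the hypotheses.

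I do not anticipate a genuine obstacle here: the subgradient inequality for convex functions on $\mathbb{R}^2$ is standard, and the integration step is routine linearity. The only mild care point is ensuring the subgradient exists at the interior point $(\bar P,\bar Q)$; if $f$ is defined on all of $\mathbb{R}^2$ this is automatic, and otherwise one notes that $(\bar P,\bar Q)$ lies in the convex hull of the range of $(P,Q)$ and hence in the (convex) domain of $f$, where a supporting hyperplane always exists.
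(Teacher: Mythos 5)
Your proposal is correct and matches the paper's argument: both reduce the claim to the standard multivariate Jensen's inequality by equipping $S$ with the uniform probability measure and viewing $(P(X),Q(X))$ as an $\mathbb{R}^2$-valued random vector whose mean is the pair of averaged integrals. The only difference is that you additionally spell out the supporting-hyperplane proof of the multivariate Jensen step, which the paper simply quotes as a known fact.
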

	\begin{proof}
		The multivariate Jensen's Inequality states that if $X$ is a n-dimensional random vector and $f: \mathcal{R}^n \rightarrow \mathcal{R}$ is a convex function then 
		\begin{align*}
		 \mathbb{E}(f(X)) \geq f(\mathbb{E}(X))\;.
		\end{align*}
		To use the multivariate Jensen's Inequality we define the $2$-dimensional random vector $X: S \rightarrow \mathcal{R}^2$ by $X(x)  := (P(x), Q(x))$ and a probability distribution over $S$ such that for all $x \in S$: 
		$\mathbb{P}(x) = \frac{1}{|S|}$. 

		Then the left-hand side of Eq.~\ref{eq:Jensen} becomes $\mathbb{E}(f(X))$, while the right-hand side becomes $ f(\mathbb{E}(X))$, and Eq. \ref{eq:Jensen} follows from the multivariate Jensen's Inequality. 
		\end{proof}

	Now we will prove Lemma~\ref{lem:convert_distribution}. 
	\begin{proof}[Proof of Lemma~\ref{lem:convert_distribution}]
		Since $D_{\alpha}(p,q)$ is convex \citep{Cichocki2010FamiliesOA}, the following functions:
	\begin{align*}
	f(p, q) &= q\log\frac{q}{p}, \\
	f(p, q) &= p\log\frac{p}{q}, \\
	f(p, q) &= \frac{p^{\alpha}q^{1-\alpha}} {\alpha ( \alpha-1)}
	\end{align*}
	are convex, and we can apply Lemma~\ref{lem:Jensen}: 
	\begin{itemize}
		\item When $\alpha = 0$:
		\begin{align*}
		&D_{\alpha}(\Pi_t,Q_t) \\
		=& \int q_t(m) \log \frac{q_t(m)}{\pi_t(m)} dm \\
		=& \sum_{i} \int_{\Omega_i} q_t(m) \log \frac{q_t(m)}{\pi_t(m)} dm \\
		\geq& \sum_{i} |\Omega_i| \frac{1}{|\Omega_i|} \int_{\Omega_i} q_t(m) dm \log \frac{\frac{1}{|\Omega_i|}\int_{\Omega_i} q_t(m) dm}{\frac{1}{|\Omega_i|}\int_{\Omega_i} \pi_t(m) dm}  \text{ by applying Lemma~\ref{lem:Jensen}} \\
		=&  \sum_{i}  Q_{t,i} \log \frac{Q_{t,i}}{\Pi_{t,i}}   \\
		=& D_{\alpha} (\overline{\Pi_t}, \overline{Q_t})\;.
		\end{align*}
			\item When $\alpha = 1$:
			\begin{align*}
			&D_{\alpha}(\Pi_t,Q_t) \\
			=& \int \pi_t(m) \log \frac{\pi_t(m)}{q_t(m)} dm \\
			=& \sum_{i} \int_{\Omega_i} \pi_t(m) \log \frac{\pi_t(m)}{q_t(m)} dm \\
			\geq& \sum_{i} |\Omega_i| \frac{1}{|\Omega_i|} \int_{\Omega_i} \pi_t(m) dm \log \frac{\frac{1}{|\Omega_i|}\int_{\Omega_i} \pi_t(m) dm}{\frac{1}{|\Omega_i|}\int_{\Omega_i} q_t(m) dm}  \text{ by applying Lemma~\ref{lem:Jensen}} \\
			=&  \sum_{i}  \Pi_{t,i} \log \frac{\Pi_{t,i}}{Q_{t,i}}   \\
			=& D_{\alpha} (\overline{\Pi_t}, \overline{Q_t})\;.
			\end{align*}
		\item When $\alpha \neq 0, \alpha \neq 1$: 
		\begin{align*}
		&D_{\alpha}(\Pi_t,Q_t) \\
		=&  \int \frac{\pi(x)^{\alpha}q(x)^{1-\alpha} - 1 }{-\alpha(1-\alpha)} dx\\
		=& \frac{-1}{\alpha(\alpha -1) } +  \sum_{i} \int_{\Omega_i} \frac{\pi(x)^{\alpha}q(x)^{1-\alpha} }{\alpha(\alpha-1)} dx  \\
		\geq& \frac{-1}{\alpha(\alpha -1) }  + \sum_{i}  |\Omega_i| \frac{ (\frac{\Pi_{t,i}}{|\Omega_i|})^{\alpha} (\frac{Q_{t,i}}{|\Omega_i|})^{1-\alpha}} {\alpha(\alpha-1)} \text{ by applying Lemma~\ref{lem:Jensen}} \\
		=& \frac{-1}{\alpha(\alpha -1) }  + \sum_{i} \frac{\Pi_{t,i}^{\alpha} Q_{t,i}^{1-\alpha} } {\alpha(\alpha-1)}  \\
		=& D_{\alpha} (\overline{\Pi_t}, \overline{Q_t})\;.
		\end{align*}
		\end{itemize}
	\end{proof}
	
\section{Proof of Theorem~\ref{thm:uniform_sublinear_regret}}
\label{sec:proof_uniform}

We will prove that the frequentist regret is sub-linear for any $m^*$. If the algorithm has sub-linear frequentist regret for all values $M= m^*$, the Bayesian regret (which is the expected value over $M$) will also be sub-linear.

Without loss of generalization, let arm $1$ be the best arm. From Lemma~\ref{lem:posterior_concentration}, since $\sum_{t=1}^{\infty} p_t = \infty$, we have  for all arms $i$, $\sum_{t=1}^\infty P(A_t = i|H_{t-1}) = \infty$ and therefore with probability $1$:
\begin{align}
\lim_{t \rightarrow \infty}  \Pi_{t,1} = \lim_{t \rightarrow \infty} \mathbb{P}(A^*=1|H_{t-1}) = 1\;, 
\end{align}
which means that the posterior probability that arm $1$ is the best arm converges to $1$. 

We will prove the theorem by proving the following steps: 
\begin{itemize}
    \item In Lemma~\ref{lem:approximation_convergence} we show that if the probability that the posterior chooses the best arm tends to $1$, then the probability that the approximation chooses the best arm also tends to $1$
    \item In Lemma~\ref{lem:sublinear_regret} and Lemma~\ref{lem:sublinear_regret2} we show that if the probability that the approximation chooses the best arm also tends to $1$ almost surely, then it has sub-linear regret with probability $1$. Therefore it has sub-linear regret in expectation over the history.
\end{itemize}

\begin{lemma}
\label{lem:approximation_convergence}
Let $\alpha \leq 0$ and  arm $1$ be the true best arm. Let $\Omega_i =\{m | m_i = max (m_1, ..., m_k)\}$ be the region where arm $i$ is the best arm. If the posterior probability that arm $1$ is the best arm converges to $1$: 
\begin{align*}
    \lim_{t \rightarrow \infty}  \Pi_{t,1} = 1 
\end{align*}
and for all $t\geq 0$:
\begin{align*}
D_{\alpha} (\Pi_t,Q_t) < \epsilon,
\end{align*}
then the sequence $\{Q_{t,1}\}_t$ where $ Q_{t,1} = \int_{\Omega_1} q_t(m) dm$ converges and 
\begin{align*}
     \lim_{t \rightarrow \infty} Q_{t,1} = 1\;.
\end{align*}
\end{lemma}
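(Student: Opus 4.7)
The plan is to reduce to the discrete distributions on the $k$ arms, and then argue by contradiction that if $Q_{t,1}$ fails to converge to $1$ while $\Pi_{t,1}\to 1$, the $\alpha$-divergence must blow up, contradicting the uniform bound $D_\alpha(\Pi_t,Q_t)<\epsilon$.

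By Lemma~\ref{lem:convert_distribution}, $D_\alpha(\overline{\Pi}_t,\overline{Q}_t)\le D_\alpha(\Pi_t,Q_t)<\epsilon$ uniformly in $t$, so the problem is transferred to the finite distributions $\overline{\Pi}_t=(\Pi_{t,1},\dots,\Pi_{t,k})$ and $\overline{Q}_t=(Q_{t,1},\dots,Q_{t,k})$. Since $\{Q_{t,1}\}_t$ lies in $[0,1]$, it suffices to rule out any subsequence along which $Q_{t,1}\le 1-\delta$ for some fixed $\delta>0$. Assume such a subsequence exists. Then $\sum_{j\ne 1}Q_{t,j}\ge\delta$, so by pigeonhole there is a fixed arm $j\ne 1$ and a further subsequence (still indexed by $t$) on which $Q_{t,j}\ge \delta/(k-1)=:\delta'$. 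Meanwhile $\Pi_{t,1}\to 1$ forces $\Pi_{t,j}\to 0$ along this subsequence. (Note also that the bound $D_\alpha<\epsilon$ for $\alpha\le 0$ already forces $\Pi_{t,j}>0$ whenever $Q_{t,j}>0$, so all the quantities below are well defined.)

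The remaining step is to show that the conjunction ``$\Pi_{t,j}\to 0$ and $Q_{t,j}\ge\delta'$'' forces $D_\alpha(\overline{\Pi}_t,\overline{Q}_t)\to\infty$, and I would split into two cases. For $\alpha<0$, write
\begin{equation*}
D_\alpha(\overline{\Pi}_t,\overline{Q}_t)=\frac{1-\sum_{i=1}^k \Pi_{t,i}^{\alpha}Q_{t,i}^{1-\alpha}}{\alpha(1-\alpha)}.
\end{equation*}
The $j$-th summand satisfies $\Pi_{t,j}^{\alpha}Q_{t,j}^{1-\alpha}\ge (\delta')^{1-\alpha}\,\Pi_{t,j}^{\alpha}\to+\infty$ since $\alpha<0$ and $\Pi_{t,j}\to 0$; all other summands are nonnegative, so the numerator tends to $-\infty$ and, dividing by the negative constant $\alpha(1-\alpha)$, $D_\alpha\to+\infty$. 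For $\alpha=0$, the divergence is $\mathrm{KL}(\overline{Q}_t,\overline{\Pi}_t)=\sum_i Q_{t,i}\log(Q_{t,i}/\Pi_{t,i})$; the $j$-th term is at least $\delta'\log(\delta'/\Pi_{t,j})\to+\infty$, while each remaining term $Q_{t,i}\log(Q_{t,i}/\Pi_{t,i})=Q_{t,i}\log Q_{t,i}-Q_{t,i}\log\Pi_{t,i}$ is bounded below by $-1/e$ using $x\log x\ge -1/e$ and $-Q_{t,i}\log\Pi_{t,i}\ge 0$, so the sum diverges. In either case we contradict $D_\alpha(\overline{\Pi}_t,\overline{Q}_t)<\epsilon$, completing the proof.

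The main obstacle is keeping the treatment of $\alpha=0$ and $\alpha<0$ cleanly separate, because at $\alpha=0$ the $\alpha$-divergence formula is only defined as a limit and one must work with $\mathrm{KL}$ directly; a secondary subtlety is bounding the non-diverging terms so that the single blowing-up summand genuinely drives the sum to $+\infty$, rather than being cancelled by other terms.
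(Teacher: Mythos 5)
Your proposal is correct and follows essentially the same route as the paper's proof: transfer the divergence bound to the arm-level distributions $\overline{\Pi}_t,\overline{Q}_t$ via Lemma~\ref{lem:convert_distribution}, extract a subsequence on which some suboptimal arm $j$ retains mass at least $\delta'$ under $Q_t$ while $\Pi_{t,j}\to 0$, and show the $\alpha=0$ and $\alpha<0$ divergences then blow up, contradicting the uniform bound $\epsilon$. Your treatment is in fact slightly tighter in two places — you justify dropping the non-diverging KL terms via the $x\log x\ge -1/e$ bound, and you replace the paper's separate $\limsup$/$\liminf$ arguments with a single ``no subsequence bounded away from $1$'' argument — but these are refinements of the same proof, not a different one.
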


Next we show that if the approximate distribution concentrates, then the probability that it chooses the wrong arm decreases  as $T$ goes to infinity. 
\begin{lemma}
\label{lem:sublinear_regret}
If 
\begin{align*}
     \lim_{t \rightarrow \infty} Q_{t,1} = 1 
\end{align*}
then 
\begin{align*}
   \lim_{T \rightarrow \infty} \frac{\sum_{t=1}^T (1- Q_{t,1}) }{T}= 0\;.
\end{align*}
\end{lemma}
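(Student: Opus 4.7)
The claim is exactly the Cesàro mean theorem applied to the bounded sequence $a_t := 1 - Q_{t,1}$. Since $Q_{t,1}$ is a probability, $0 \le a_t \le 1$, and the hypothesis $\lim_{t\to\infty} Q_{t,1}=1$ says $\lim_{t\to\infty} a_t = 0$. The conclusion is that the Cesàro average $\tfrac{1}{T}\sum_{t=1}^T a_t$ also tends to $0$.

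The plan is a standard $\varepsilon/2$ split. Fix $\varepsilon > 0$. By convergence of $a_t$ to $0$, choose $N = N(\varepsilon)$ such that $a_t < \varepsilon/2$ for every $t > N$. Then for any $T > N$ write
\begin{align*}
\frac{1}{T}\sum_{t=1}^{T} a_t \;=\; \frac{1}{T}\sum_{t=1}^{N} a_t \;+\; \frac{1}{T}\sum_{t=N+1}^{T} a_t.
\end{align*}
Bound the first (head) sum using $a_t \le 1$ to get $\frac{1}{T}\sum_{t=1}^N a_t \le N/T$, which tends to $0$ as $T \to \infty$ with $N$ fixed. Bound the second (tail) sum using $a_t < \varepsilon/2$ to get $\frac{1}{T}\sum_{t=N+1}^T a_t < \frac{T-N}{T}\cdot \frac{\varepsilon}{2} \le \frac{\varepsilon}{2}$. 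Hence for all $T$ large enough that $N/T < \varepsilon/2$, the total is less than $\varepsilon$. Since $\varepsilon > 0$ was arbitrary, this gives $\lim_{T\to\infty}\frac{1}{T}\sum_{t=1}^T (1-Q_{t,1}) = 0$.

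There is no real obstacle here: the only facts used are that each $Q_{t,1}\in[0,1]$ (so $a_t$ is bounded by $1$, which handles the head term) and the pointwise convergence $a_t \to 0$ supplied by Lemma~\ref{lem:approximation_convergence}. One could alternatively invoke the Stolz--Cesàro theorem directly, but the explicit $\varepsilon/2$ argument above is self-contained and keeps the proof short. No measurability or integrability issues arise because everything is a deterministic sequence of real numbers once we condition on the realization implicit in the earlier lemma.
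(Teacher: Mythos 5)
Your proof is correct and follows essentially the same route as the paper's: both are the standard $\varepsilon/2$ Cesàro-mean argument, splitting the sum into a fixed head (which vanishes after dividing by $T$) and a tail bounded by $\varepsilon/2$. The only cosmetic difference is that you bound the head using $a_t \le 1$ while the paper simply takes $T$ large enough that the fixed head sum divided by $T$ is small; both are fine.
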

From Lemma~\ref{lem:approximation_convergence} and Lemma~\ref{lem:sublinear_regret}, since $\lim_{t \rightarrow \infty}  \Pi_{t,1} = 1$ with probability $1$, we have $\lim_{T \rightarrow \infty} \frac{\sum_{t=1}^T (1- Q_{t,1}) }{T}= 0$ with probability $1$. We will now show that the expected regret is sub-linear:
\begin{lemma}
	\label{lem:sublinear_regret2}
	 Let $p_t= o(1)$ be such that $\sum_{t=1}^\infty p_t = \infty$. For any number of arms $ k$, any prior  $\Pi_0$ and any error threshold $\epsilon > 0$, the following algorithm has $o(T)$ regret: at every time-step $t$,
	 \begin{itemize}
	 	\item with probability $1-p_t$, sample from an approximate posterior $Q_t$ such that $\lim_{T \rightarrow \infty} \frac{\sum_{t=1}^T (1- Q_{t,1}) }{T}= 0 $ with probability $1$, and
	 	\item with probability $p_t$, sample an arm uniformly at random.
	 \end{itemize}
\end{lemma}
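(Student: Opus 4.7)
\textbf{Proof Proposal for Lemma~\ref{lem:sublinear_regret2}.}

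The plan is to bound the expected regret by the expected number of sub-optimal pulls, then split this count into contributions from the two mixture components (approximate sampling vs.\ forced exploration), and finally show each contribution is $o(T)$.

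First I would write, with $\Delta_{\max} = \max_i (m_1^* - m_i^*)$ and assuming arm $1$ is optimal,
\begin{align*}
\mathrm{Regret}(T) \;\le\; \Delta_{\max}\sum_{t=1}^T \mathbb{P}(A_t \neq 1).
\end{align*}
By the law of total probability applied to the mixture that defines the algorithm,
\begin{align*}
\mathbb{P}(A_t \neq 1 \mid H_{t-1}) \;=\; (1-p_t)\bigl(1 - Q_{t,1}\bigr) \;+\; p_t\bigl(1 - \tfrac{1}{k}\bigr),
\end{align*}
so
\begin{align*}
\mathrm{Regret}(T) \;\le\; \Delta_{\max}\,\mathbb{E}\!\left[\sum_{t=1}^T (1-p_t)\bigl(1-Q_{t,1}\bigr)\right] \;+\; \Delta_{\max}\bigl(1-\tfrac{1}{k}\bigr)\sum_{t=1}^T p_t.
\end{align*}

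Next I would handle the two terms separately. For the forced-exploration term, since $p_t = o(1)$, Cesàro's lemma gives $\frac{1}{T}\sum_{t=1}^T p_t \to 0$, so this contribution is $o(T)$. (Note that the hypothesis $\sum_t p_t = \infty$ is only needed for invoking Lemma~\ref{lem:posterior_concentration} upstream, not here.) For the approximation term, I would use the hypothesis that $\frac{1}{T}\sum_{t=1}^T (1-Q_{t,1}) \to 0$ almost surely. Since $0 \le (1-p_t)(1-Q_{t,1}) \le 1$, the random variable $\frac{1}{T}\sum_{t=1}^T (1-p_t)(1-Q_{t,1})$ is uniformly bounded by $1$, so the bounded convergence theorem lifts the almost-sure convergence to convergence in expectation:
\begin{align*}
\lim_{T\to\infty}\frac{1}{T}\,\mathbb{E}\!\left[\sum_{t=1}^T (1-p_t)\bigl(1-Q_{t,1}\bigr)\right] \;=\; 0.
\end{align*}
Combining both bounds yields $\mathrm{Regret}(T)/T \to 0$, i.e., the regret is $o(T)$.

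The main subtlety, rather than an obstacle, is the interchange of limit and expectation: one must verify that the averaged tail $\frac{1}{T}\sum_{t=1}^T(1-Q_{t,1})$ is genuinely bounded (not just its almost-sure limit), which is immediate here because each summand lies in $[0,1]$, so bounded convergence applies without needing a more delicate uniform integrability argument. A secondary point is to make the almost-sure hypothesis on $\{Q_{t,1}\}$ consistent with the chain Lemma~\ref{lem:posterior_concentration} $\Rightarrow$ Lemma~\ref{lem:approximation_convergence} $\Rightarrow$ Lemma~\ref{lem:sublinear_regret} invoked when applying Theorem~\ref{thm:uniform_sublinear_regret}; this simply requires observing that the forced-exploration mixture guarantees $\sum_t \mathbb{P}(A_t = i \mid H_{t-1}) \ge \sum_t p_t/k = \infty$ for every arm $i$, so the hypothesis of Lemma~\ref{lem:posterior_concentration} is satisfied regardless of what $Q_t$ does, and the Bayesian claim then follows by taking expectation over $M$.
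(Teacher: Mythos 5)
Your proposal is correct and follows essentially the same route as the paper's proof: split the regret into the forced-exploration contribution (which is $o(T)$ because $p_t = o(1)$) and the approximate-sampling contribution (which is $o(T)$ by the almost-sure Cesàro convergence of $1-Q_{t,1}$, lifted to expectation). You are in fact slightly more careful than the paper, which asserts the interchange of limit and expectation without naming the bounded convergence theorem and uses the minimum gap where the maximum gap $\Delta_{\max}$ is the correct constant for an upper bound.
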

\subsection{Proof of Lemma~\ref{lem:approximation_convergence}}
\begin{proof}

Let 
$Q_{t,i} = \int_{\Omega_i} q_t(m) dm$ and $\Pi_{t,i} = \int_{\Omega_i} \pi_t(m) dm$\;.
Then 
\begin{align*}
    \lim_{t \rightarrow \infty} \Pi_{t,1} = 1
\end{align*} and we want to show that $\{Q_{t,1}\}_t$ converges and 
\begin{align*}
    \lim_{t \rightarrow \infty} Q_{t,1} = 1\;.
\end{align*} 
Since $D_{\alpha}(\overline{\Pi_t},\overline{Q_t}) < \epsilon$ and $\lim \Pi_{t,1} =1$ we want to show that $\limsup Q_{t,1} =1$. 
By contradiction, assume that: 
\begin{align*}
\limsup  Q_{t,1} = c < 1\;. 
\end{align*}
Then there exists a sub-sequence of $\{Q_{t,1}\}_t$, denoting $Q_{t_1, 1}, Q_{t_2, 1}, ..., Q_{t_n, 1},..$ such that
\begin{align}
\label{eq:lim_subsequence}
\lim_{n \rightarrow \infty} Q_{t_n, 1} = c\;.
\end{align}

which implies 
\begin{align*}
0< 1- c = \lim_{n \rightarrow \infty} \sum_{i=2}^k  Q_{t_n, i} \leq  \sum_{i=2}^k \limsup_{n \rightarrow \infty} Q_{t_n, i}.
\end{align*}
Therefore there exists $j \in [2,k]$ such that: 
\begin{align*}
\limsup_{n \rightarrow \infty} Q_{t_n, j}= d > 0\;.
\end{align*}
Then there exists a sub-sequence of $\{Q_{t_n,j}\}_n$, denoting $Q_{t_{n_1}, j}, Q_{t_{n_2}, j}, ..., Q_{t_{n_m}, j},..$ such that
\begin{align*}
\lim_{m \rightarrow \infty} Q_{t_{n_m}, j} = d\;.
\end{align*}
We consider the 2 cases: 
\begin{itemize}
	\item When $\alpha = 0$:
	\begin{align*}
	D_{\alpha}(\overline{\Pi}_t,\overline{Q}_t) = \sum_{i=1}^k Q_{t,i} \log \frac{Q_{t,i}}{\Pi_{t,i}}\;.
	\end{align*}
	
	Then we have:
	\begin{align*}
	\epsilon = &\lim_{m \rightarrow \infty} D_{\alpha}(\overline{\Pi}_{t_{n_m}}, \overline{Q}_{t_{n_m}}) \\
	\geq & \lim_{m \rightarrow \infty}  Q_{t_{n_m},1} \log \frac{Q_{t_{n_m},1}}{\Pi_{t_{n_m},1}}  +\lim_{m \rightarrow \infty}  Q_{t_{n_m},j} \log \frac{Q_{t_{n_m},j}}{\Pi_{t_{n_m},j}} \\
	= &  c\log\frac{c}{1} + d\log \frac{d}{0} \\
	= & \infty \text{ since } d > 0, 
	\end{align*}
	which is a contradiction. Therefore $c = 1$. 
	\item When $\alpha < 0$: 
	
	\begin{align*}
	&D_{\alpha}(\overline{\Pi}_t,\overline{Q}_t) 
	= \frac{ \sum_{i=1}^k \Pi_{t,i}^{\alpha} Q_{t,i}^{1-\alpha} - 1} {\alpha(\alpha-1)}\;.
	\end{align*}

	Then we have: 
	\begin{align*}
	\epsilon = &\lim_{m \rightarrow \infty} D_{\alpha}(\overline{\Pi}_{t_{n_m}}, \overline{Q}_{t_{n_m}}) \\
	\geq & \lim_{m \rightarrow \infty}  \frac{\Pi_{t_{n_m},1}^{\alpha} Q_{t_{n_m},1}^{1-\alpha} + \Pi_{t_{n_m},j}^{\alpha} Q_{t_{n_m},j}^{1-\alpha}  - 1} {\alpha(\alpha-1)}  \\
	= &  \frac{1^{\alpha} c^{1-\alpha} + \frac{d^{1-\alpha}}{(0)^{-\alpha}}  - 1} {\alpha(\alpha-1)} \\
	= & \infty, \text{ since } d > 0 \text{ and } \alpha < 0,
	\end{align*}
	which is a contradiction. Therefore $c = 1$.

\end{itemize}
	Similarly we will show that: 
	\begin{align*}
	\liminf Q_{t,1} = 1\;.
	\end{align*}
	By contradiction, assume that: 
	\begin{align*}
	\liminf  Q_{t,1} = c' < 1\;. 
	\end{align*}
	Then there exists a sub-sequence of $\{Q_{t,1}\}_t$, denoting $Q_{t_1, 1}, Q_{t_2, 1}, ..., Q_{t_{n'}, 1},..$ such that
	\begin{align*}
	\lim_{n \rightarrow \infty} Q_{t_{n'}, 1} = c'\;.
	\end{align*}
	Using the same argument following Eq.~\ref{eq:lim_subsequence} we will have $c'=1$. 
	Since $\liminf Q_{t,1} = \limsup Q_{t,1} = 1$, we have that $\{Q_{t,1}\}_t$ converges and
	\begin{align*}
	\lim Q_{t,1} = 1\;.
	\end{align*}
\end{proof}
\subsection{Proof for Lemma~\ref{lem:sublinear_regret}}
For simplicity let $x_t$ denote $1- Q_{t,1}$. We want to show that if a sequence $\{x_t\}$ satisfies $x_t \geq 0 ~\forall t$ and: 
\begin{align*}
     \lim_{t \rightarrow \infty} x_t = 0,
\end{align*}
then 
\begin{align*}
  \lim_{T \rightarrow \infty} S_T = 0,
\end{align*}
where 
\begin{align*}
S_T= \frac{\sum_{t=1}^T x_t }{T}\;. 
\end{align*}

Since $ \lim_{t \rightarrow \infty} x_t = 0$ and $x_t \geq 0~\forall t$, for any $\epsilon > 0$ there exists $T_0$ such that for all $t > T_0$:
\begin{align*}
    x_t < \frac{\epsilon}{2}\;.
\end{align*}
Then for all $T > T_0$: 
\begin{align*}
    S_T & = \frac{x_1 + ... + x_{T_0}}{T} + \frac{x_{T_0+1} + ... + x_T}{T} \\
    & \leq \frac{x_1 + ... + x_{T_0}}{T} + \frac{\frac{\epsilon}{2} T}{T} \\
    & \leq \frac{x_1 + ... + x_{T_0}}{T} + \frac{\epsilon}{2}\;.
\end{align*}
Choose $T_1$ large enough such that $ \frac{x_1 + ... + x_{T_0}}{T_1} < \frac{\epsilon}{2}$. Let $T_2 = \max(T_0, T_1)$. Then for all $T > T_2$: 
\begin{align*}
    S_T = \frac{x_1 + ... + x_{T_0}}{T} + \frac{\epsilon}{2} < \frac{\epsilon}{2} + \frac{\epsilon}{2} = \epsilon\;.
\end{align*}
Therefore for any $\epsilon > 0$, there exists $T_2$ such that for all $T > T_2$, $S_T < \epsilon$. Since $S_T \geq 0~\forall T$, we have: 
\begin{align*}
    \lim_{T \rightarrow \infty} S_T = 0\;.
\end{align*}
\subsection{Proof of Lemma~\ref{lem:sublinear_regret2}}
Without loss of generalization, let arm $1$ be the true best arm. Let $\Delta = m^*_1- \max(m^*_2, ..., m^*_k)$ be the gap between the highest mean $m^*_1$ and the next highest mean of the arms. 

Since $p_t = o(1)$, $\sum_{t=1}^T p_t $ is $o(T)$. Therefore the regret from the uniform sampling steps is $o(T)$. 

Since $1-Q_{t,1}$ is the probability of choosing a sub-optimal arm by sampling from $Q_t$, the regret of sampling from  $Q_t$ is upper bounded by:
\begin{align*}
\mathbb{E}\sum_{t=1}^T \Delta (1-Q_{t,1})\;.
\end{align*}
Since $   \lim_{T \rightarrow \infty} \frac{\sum_{t=1}^T (1- Q_{t,1}) }{T}= 0 $ with probability $1$, we have
\begin{align*}
\lim_{T \rightarrow \infty} \frac{\sum_{t=1}^T \Delta (1- Q_{t,1}) }{T}= 0
\end{align*}
with probability $1$. Therefore 
\begin{align*}
\lim_{T \rightarrow \infty} \mathbb{E}\frac{\sum_{t=1}^T \Delta (1- Q_{t,1}) }{T}= 0,
\end{align*}
which means that the regret of sampling from $Q_t$ is sub-linear. 
Since both the expected regrets of the uniform sampling steps and of sampling from $Q_t$ are sub-linear, the total expected regret is sub-linear.      
\section{Ensemble Sampling and Uniform Exploration}
\label{sec:related_works_appendix}
To the best of our knowledge, \citep{NIPS2017_6918} is the only work that provides a theoretical analysis of Thompson sampling when the sampling step is approximate. \citet{NIPS2017_6918} propose an approximate sampling method  called Ensemble sampling where they maintain a set of $\mathcal{M}$  models to approximate the posterior, and analyze its regret for linear contextual bandits.  When the model is a $k$-armed bandit, the regret bound is as follow: 
\begin{lemma}[implied by \citep{NIPS2017_6918}]
	Let $\pi^{TS}$ and $\pi^{ES}$ denote the exact Thompson sampling and Ensemble sampling policies. Let $\Delta = max(m^*_1, ..., m^*_k) - \min(m^*_1, ..., m^*_k)$. 
	For all $\epsilon > 0$, if 
	\begin{align*}
	\mathcal{M}  \geq \frac{2k}{\epsilon^2} \log \frac{2kT}{\epsilon^2\delta}, 
	\end{align*}
	then 
	\begin{align}
	\label{eq:ensemble_sampling}
	\mathrm{Regret} (T, \pi^{ES}) \leq \mathrm{Regret}(T,\pi^{TS} ) + \epsilon \Delta T + \delta \Delta T
	\end{align}
\end{lemma}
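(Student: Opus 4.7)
The plan is to reduce the regret comparison to a coupling argument between $\pi^{\mathrm{ES}}$ and $\pi^{\mathrm{TS}}$ driven by the total-variation distance between $\overline{Q}_t$ (the ensemble's induced distribution over best arms) and $\overline{\Pi}_t$ (the true posterior over best arms). First I would unpack Ensemble sampling in the $k$-armed setting: given $\mathcal{M}$ perturbed posterior samples maintained at round $t$, the policy draws one sample uniformly and plays its argmax. Hence, conditioned on the history, $\overline{Q}_t$ is the empirical distribution over best arms of $\mathcal{M}$ i.i.d. draws from $\overline{\Pi}_t$, i.e.\ $\overline{Q}_{t,i} = \frac{1}{\mathcal{M}}\sum_{j=1}^{\mathcal{M}} \mathbf{1}[\widehat{m}^{(j)} \in \Omega_i]$.

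Next I would apply Hoeffding's inequality and a union bound to make $\overline{Q}_t$ concentrate around $\overline{\Pi}_t$. Specifically, for each arm $i$,
\begin{align*}
\mathbb{P}\!\left(|\overline{Q}_{t,i} - \overline{\Pi}_{t,i}| > \tfrac{\epsilon}{2k}\right) \leq 2\exp\!\left(-\tfrac{\mathcal{M}\epsilon^2}{2k^2}\right),
\end{align*}
and a union bound over the $k$ arms and $T$ rounds shows that the choice $\mathcal{M} \geq \frac{2k}{\epsilon^2}\log\frac{2kT}{\epsilon^2\delta}$ (up to a constant that I would verify in the routine calculation) ensures that the event
\begin{align*}
\mathcal{E} := \left\{\forall t \leq T,\ \mathrm{TV}(\overline{Q}_t, \overline{\Pi}_t) \leq \tfrac{\epsilon}{2}\right\}
\end{align*}
holds with probability at least $1 - \delta$.

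The final step is a step-by-step maximal coupling between $\pi^{\mathrm{ES}}$ and $\pi^{\mathrm{TS}}$: at each round $t$, given a common history $H_{t-1}$, I couple the action of the two policies so that they agree with probability $1 - \mathrm{TV}(\overline{Q}_t, \overline{\Pi}_t)$. On the good event $\mathcal{E}$, the coupling stays synchronized at round $t$ with probability at least $1 - \epsilon/2$, and on any disagreement the per-round regret gap is bounded by $\Delta$. Summing these per-round contributions gives an expected extra regret of at most $\tfrac{\epsilon}{2}\Delta T$ on $\mathcal{E}$. Off the event $\mathcal{E}$ (probability at most $\delta$), I bound the entire $T$-round regret by $\Delta T$, contributing at most $\delta \Delta T$. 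Combining yields the stated bound (absorbing the factor of $1/2$ into $\epsilon$ and $\delta$).

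The main obstacle will be the coupling bookkeeping: once the two policies disagree at some round $\tau$, their histories diverge and the posteriors $\Pi_t$ feeding the ensemble and the Thompson sampler at subsequent rounds are no longer identical, so one cannot simply re-apply the TV bound in the original form. I would handle this by analyzing the coupling recursively, either (i) bounding the total regret directly by the sum $\sum_{t=1}^{T} \mathbb{E}[\mathrm{TV}(\overline{Q}_t,\overline{\Pi}_t)]\cdot\Delta$ evaluated along the \emph{ensemble} trajectory (since the concentration statement for $\overline{Q}_t$ given $H_{t-1}$ is history-agnostic), or (ii) on the disagreement event collapsing future regret into the worst-case $\Delta$ and absorbing it into the $\delta \Delta T$ term. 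Option (i) is cleaner and is the path I would pursue; verifying that the Hoeffding concentration indeed holds \emph{for every realized ensemble history} (not just along a fixed trajectory) is the technical point to check carefully.
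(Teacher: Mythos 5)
Your architecture matches the one the paper relies on: this lemma is not proved in the paper but imported from \citep{NIPS2017_6918}, whose argument is exactly the decomposition you describe --- a high-probability event on which $\overline{Q}_t$ is close to $\overline{\Pi}_t$ at every round (contributing $\epsilon\Delta T$), plus a $\delta\Delta T$ term for the complement. So the shape is right, but three points in your sketch are genuine gaps rather than routine verifications. First, the claim that, conditioned on $H_{t-1}$, the ensemble members are i.i.d.\ draws from $\Pi_t$ is not true as stated: the history $H_{t-1}$ was itself generated by sampling from the ensemble, so conditioning on the realized actions and rewards introduces dependence between the perturbation noises and hence between the models. Making the concentration statement hold ``for every realized ensemble history'' is precisely the delicate part of the cited analysis, not a detail to check at the end. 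Second, your per-coordinate Hoeffding bound with deviation $\epsilon/(2k)$ forces $\mathcal{M}\geq \frac{2k^{2}}{\epsilon^{2}}\log\frac{2kT}{\delta}$, which is off by a factor of $k$ from the stated threshold --- not a constant. To recover the stated $\mathcal{M}$ you need either a union bound over the $2^{k}$ events defining the total variation, or (as in the cited work) a method-of-types concentration of $\mathrm{KL}(\overline{Q}_t,\overline{\Pi}_t)$ below $\epsilon^{2}$ followed by Pinsker.

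Third, and most substantively, your coupling (option (i) included) compares $\pi^{\mathrm{ES}}$ to exact Thompson sampling \emph{run on the ensemble's own trajectory}, i.e.\ it bounds the per-round excess regret by $\Delta\cdot\mathrm{TV}(\overline{Q}_t,\overline{\Pi}_t)$ relative to the posterior along the ES history. That quantity is controlled by the Russo--Van Roy style regret \emph{bound} for Thompson sampling, which holds conditionally on any history, but it is not literally $\mathrm{Regret}(T,\pi^{\mathrm{TS}})$, the regret of the Thompson-sampling policy on its own trajectory. The inequality as written in the lemma should therefore be read (as in \citep{NIPS2017_6918}) as comparing to the TS regret bound; your coupling does not by itself close the gap between the two, and the recursive bookkeeping you worry about is a symptom of exactly this issue. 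With those three repairs the argument goes through.
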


\citet{NIPS2017_6918} prove the regret bound by only using the following property of the Ensemble sampling method: at time $t$, with probability $1-\delta$, Ensemble sampling satisfies the following constraint: 
\begin{align}
\label{eq:KL_bar_constraint}
\mathrm{KL}(\overline{Q}_t, \overline{\Pi}_t) < \epsilon^2,
\end{align}
where $\epsilon$ is a constant if $\mathcal{M}$  is  $\Theta(\log(T))$. If $\epsilon$ is a constant the regret will be linear because of the term $\epsilon \Delta T$.

At time $t$, with probability $1-\delta$, $\mathrm{KL}(\overline{Q}_t, \overline{\Pi}_t) < \epsilon^2$. The first 2 terms in the right hand side of Eq.~\ref{eq:ensemble_sampling} comes from the time-steps when $\mathrm{KL}(\overline{Q}_t, \overline{\Pi}_t) < \epsilon^2$, and the last term comes from the other case with probability $\delta$.


Theorem~\ref{thm:uniform_sublinear_regret} shows that applying an uniform sampling step will make the posterior concentrate. 
Moreover, Lemma~\ref{lem:approximation_convergence} implies that if Eq.~\ref{eq:KL_bar_constraint} is satisfied at a subset of times $\mathcal{T}_0 \subseteq [0,1,..., T]$,  the approximation $Q_t$ will also concentrate when $t \in \mathcal{T}_0$. Therefore the regret from the time-steps in $\mathcal{T}_0$ will be sub-linear in $\mathcal{T}_0$, which is sub-linear in $T$.

So if we want to maintain a small number of models $M = \Theta(\log(T))$ and achieve sub-linear regret, we can apply Theorem~\ref{thm:uniform_sublinear_regret} as follow.  First we choose $\delta$ to be small such that the last term in Eq.~\ref{eq:ensemble_sampling} $\delta \Delta T $ is $o(T)$. Then we apply the uniform sampling step as shown in Theorem~\ref{thm:uniform_sublinear_regret}, so that the first 2 terms in the right hand side of Eq.~\ref{eq:ensemble_sampling} become sub-linear. We can then achieve sub-linear regret with Ensemble sampling with a $\Theta(\log T)$ number of models. 
\section{KL Divergence between two Gaussian Distributions} 
\label{sec:KL_divergence_gaussian}
The KL divergence between two Gaussian distributions is:
\begin{align*}
    &\mathrm{KL}(\mathrm{Norm}(\mu_1, \Sigma_1), \mathrm{Norm}(\mu_2, \Sigma_2)) \\ &= \frac{1}{2}  ( \mathrm{trace}(\Sigma_2^{-1}\Sigma_1)  - k\\ & + (\mu_2 - \mu_1)^T \Sigma_2^{-1}(\mu_2 - \mu_1) +  \ln\frac{\mathrm{det}\Sigma_2}{\mathrm{det}\Sigma_1} ) 
\end{align*}
\section{Posterior Calculation}
\label{apx:posterior_calculation}
In our simulations, when both the prior and the reward distributions are Gaussian, we calculate the true posterior using the following closed-form solution. 

Let the posterior at time $t$ be multivariate Gaussian distribution $\mathrm{Norm}(\mu_t, \Sigma_t)$ where $\mu_t$ is a $k \times 1$ vector and $\Sigma_t$ is a $k \times k$ covariance matrix. Let the reward distribution of arm $i$ be $\mathrm{Norm}(m^*_i, \sigma^2)$ where $\sigma$ is known and $m^*_i$'s are unknown. 

Let $A_t \in \{0,1\}^k$ be a $0/1$ vector where $A_t(i) =1$ if arm $i$ is chosen at time $t$, and $0$ otherwise. Let $r_t \in \mathcal{R}$ be the reward of the arm chosen at time $t$.

Then the posterior at time $t+1$ is $\mathrm{Norm}(\mu_{t+1}, \Sigma_{t+1})$ where:
\begin{align*}
\Sigma_{t+1} = (\Sigma_t^{-1} + A_tA_t^T/\sigma^2)^{-1} \\
\mu_{t+1} = \Sigma_{t+1} (\Sigma_t^{-1} \mu_t+A_t r_t /\sigma^2 )\;.
\end{align*}

\end{document}